\documentclass{article}
\usepackage{ijcai26}

\usepackage{times}
\usepackage{soul}
\usepackage{url}
\usepackage[dvipsnames]{xcolor}
\definecolor{darkblue}{rgb}{0.0, 0.0, 0.55}
\usepackage[colorlinks,citecolor=darkblue]{hyperref}
\usepackage[utf8]{inputenc}
\usepackage[small]{caption}
\usepackage{graphicx}
\usepackage{amsmath}
\usepackage{amsthm}
\usepackage{booktabs}
\usepackage{algorithm}
\usepackage{algorithmic}
\usepackage[switch]{lineno}

\usepackage{tabularx}
\usepackage{makecell}
\usepackage{hyperref}   
\usepackage{cleveref}   \usepackage{array}

\newtheorem{theorem}{Theorem}

\title{LoCO: Low-rank Compositional Rotation Fine-tuning}

\author{
An Nguyen$^1$
\and
Jaesik Choi$^{2,3}$\and
Anh Tong$^{1}$\\
\affiliations
$^1$Korea University, 
$^2$KAIST,
$^3$INEEJI 
\emails
nguyenan@korea.ac.kr,
jaesik.choi@kaist.ac.kr,
anhtong12@korea.ac.kr
}

\usepackage{tikz}
\usepackage{tikz-3dplot}
\usetikzlibrary{arrows.meta,calc}
\usetikzlibrary{math}
\usetikzlibrary{angles, quotes}
\usetikzlibrary{3d,calc,perspective}

\usepackage{booktabs, multirow}

\usepackage{amsmath}      \usepackage{amssymb}      \usepackage{amsbsy}       \usepackage{amsfonts}

\newcommand{\vect}[1]{{\mathbf{#1}}}
\newcommand{\mat}[1]{\mathbf{#1}}

\newcommand{\vI}{\mat{I}}
\newcommand{\vX}{\mat{X}}
\newcommand{\vY}{\mat{Y}}

\newcommand{\vA}{\vect{A}}

\newcommand{\vR}{\mat{R}}

\newcommand{\vW}{\mat{W}}

\newcommand{\vV}{\mat{V}}
\newcommand{\vU}{\mat{U}}

\newcommand{\vx}{\vect{x}}

\newcommand{\vz}{\vect{z}}

\def\RR{\mathbb{R}}

\newcommand{\normx}[2]{\left\|#1\right\|_{#2}}

\DeclareMathOperator{\SO}{SO}
\DeclareMathOperator{\sso}{\mathfrak{so}}

\def\rmA{{\mathbf{A}}}
\def\rmB{{\mathbf{B}}}
\def\rmR{{\mathbf{R}}}
\def\rmI{{\mathbf{I}}}
\def\rmS{{\mathbf{S}}}
\def\rmX{{\mathbf{X}}}
\def\rmY{{\mathbf{Y}}}
\def\rmZ{{\mathbf{Z}}}

\crefname{appendix}{Appendix}{Appendices}
\Crefname{appendix}{Appendix}{Appendices} \usepackage[normalem]{ulem}
\usepackage{multirow}
\usepackage{float}
\usepackage{siunitx}
 \newif\ifarxiv
\arxivtrue 
\begin{document}

\maketitle

\begin{abstract}
Parameter-efficient fine-tuning (PEFT) has emerged as an critical technique for adapting large-scale foundation models across natural language processing and computer vision. While existing methods such as low-rank adaptations achieve parameter efficiency via low-rank weight updates, they are limited in their ability to preserve the geometric structure of pretrained representations. We introduce Low-rank Compositional Orthogonal fine-tuning (LoCO), a novel PEFT method that constructs orthogonal transformations through low-rank skew-symmetric matrices and compositional rotation chains. We propose an approximation scheme that enables fully parallel computation of compositional rotations, making the approach practical for high-dimensional feature spaces. Our method maintains low computational complexity while maintaining orthogonality with controlled approximation error. We validate LoCO across diverse domains, including diffusion transformer fine-tuning, vision transformer adaptation, and language model adaptation. Our method demonstrates superior or competitive performance compared to both existing orthogonal and non-orthogonal methods.
\end{abstract} \section{Introduction}
\label{sec:intro}

Large-scale pretrained foundation models~\cite{vaswani2017attention} have revolutionized computer vision and natural language processing, achieving impressive performance across diverse downstream tasks. Models such as large language models~\cite{brown2020language}, vision transformers~\cite{vit} and diffusion transformers~\cite{diffusion_models,peebles2023scalable} contain billions of parameters, encoding rich representations learned from massive datasets. Adapting these models to specific applications through full fine-tuning is often impractical: it demands substantial computational resources, risks catastrophic forgetting of pretrained knowledge, and requires storing separate model copies for each task. Parameter-efficient fine-tuning (PEFT) methods~\cite{adapter,hu2022lora} have emerged as an effective solution for task adaptation while training only a small fraction of parameters.

Orthogonal fine-tuning methods~\cite{qiu2023controlling,liu2024boft,yuan2024bridging} have attracted attention for their geometric properties. Unlike additive methods such as LoRA~\cite{hu2022lora} that injects low-rank adaptation to weight matrices, orthogonal methods apply multiplicative transformations, which are constrained to the special orthogonal group. This constraint offers a theoretical advantage in that orthogonal transformations preserve the pairwise angular relationships among neuron representations and maintain the hyperspherical energy structure of pretrained features~\cite{qiu2023controlling}. Such properties are particularly valuable when semantic information is encoded in angular relationships rather than magnitudes, as observed in contrastive learning~\cite{chen2020angular} and face recognition~\cite{liu2017deep}.

Despite these properties, existing orthogonal fine-tuning methods face a fundamental computational bottleneck. Constructing and applying a general orthogonal matrix requires cubic time complexity for matrix inversion or matrix exponential computation, which becomes prohibitively expensive for high-dimensional feature spaces typical in modern architectures. Prior work addresses this challenge through structural constraints. For example, OFT~\cite{qiu2023controlling} uses block-diagonal matrices that restrict rotations to independent subspaces, while BOFT~\cite{liu2024boft} uses butterfly factorizations for hierarchical decomposition. However, these approaches impose specific sparsity patterns that may limit expressivity, and butterfly structures may suffer from non-contiguous memory access patterns that underutilize hardware resources.

In this paper, we propose \textbf{Lo}w-rank \textbf{C}ompositional \textbf{O}rthogonal Fine-tuning (\textbf{LoCO}), a novel approach that constructs orthogonal transformations through \emph{low-rank} skew-symmetric matrices. Our key observation is that the tangent space to the special orthogonal group, which is the space of skew-symmetric matrices, admits a low-rank parameterization. By exploiting this structure with the Sherman--Morrison--Woodbury identity~\cite{max1950inverting}, we reduce the computational complexity of orthogonal fine-tuning, allowing adaptation in high-dimensional spaces. Additionally, we leverage chains of multiple low-rank rotations to enhance the expressiveness of orthogonal transformations. Rather than computing these rotations sequentially, we employ a first-order approximation that enables parallel computation across all rotation components. Our analysis confirms that this approximation introduces negligible error given the small-magnitude adaptations characteristic of fine-tuning. A practical advantage of LoCO is a test-time temperature scaling mechanism for controllable adaptation strength. By introducing a scalar temperature parameter, we can smoothly interpolate between pretrained and fully-adapted behaviors at inference time. 

We test LoCO across three domains: (1) fine-tuning language models, (2) adapting diffusion transformers for conditional generation, and (3) fine-tuning vision transformers. Across all settings, LoCO achieves competitive or superior performance compared to established PEFT methods including LoRA~\cite{hu2022lora}, OFT~\cite{qiu2023controlling}, BOFT~\cite{liu2024boft}, and HRA~\cite{yuan2024bridging}, while maintaining computational efficiency and a low memory footprint.

\noindent\textbf{Contributions.} Our main contributions are threefold:
\begin{itemize}
    \item We propose LoCO, a parameter-efficient fine-tuning method that constructs orthogonal transformations via low-rank skew-symmetric matrices. We further introduce an approximation for compositional rotation chains that enables parallel computation while preserving orthogonality for fine-tuning.
    
    \item We demonstrate a test-time temperature scaling mechanism that enables flexible control over adaptation strength without retraining, providing an alternative to costly regularization schemes used in prior orthogonal methods.
    
    \item We achieve state-of-the-art or competitive results across diverse benchmarks, including GLUE, mathematical reasoning, and controllable generation.
\end{itemize} \section{Related Work}

\paragraph{Low-rank Adaptation.}
Building on the low intrinsic rank hypothesis~\cite{aghajanyan2020intrinsic}, LoRA~\cite{hu2022lora} parameterizes weight updates as a product of two low-rank matrices. This avoids the inference overhead of adapters~\cite{adapter} and the optimization challenges in prompt tuning~\cite{liliang2021prefix,lesteretal2021power}. Subsequent works extend this framework through adaptive rank allocation~\cite{zhang2023adaptive,kalajdzievski2023rslora,jiang2024morahighrank}, alternative initializations~\cite{meng2024pissa,wang2024milora}, and parameter sharing~\cite{balazy2024lora,dong2024arcvision,kopiczko2024vera}. DoRA~\cite{liu2024dora} further decomposes updates into magnitude and direction components. Unlike this line of research, our approach applies multiplicative transformations, constructing rotation matrices from low-rank skew-symmetric parameterizations.

\paragraph{Orthogonal Fine-Tuning.}
OFT~\cite{qiu2023controlling,qiu2025reparameterized} parameterizes orthogonal matrices through block-diagonal, butterfly structures~\cite{liu2024boft}, structured matrices~\cite{gorbunov2024group}, exploiting sparsity for efficiency. While qGOFT~\cite{ma2024parameter} decomposes rotations into Givens rotations, its sequential composition limits parallelization. Householder-based methods such as~\cite{dong2024efficient,yuan2024bridging,arcas2025hoft} use reflection parameterizations. Recent work~\cite{liao20243} applies 2D rotations directly to representation subspaces. Our method differs by constructing rotations from low-rank skew-symmetric matrices, enabling efficient computation through first-order approximation while operating on higher-dimensional subspaces.

\paragraph{Spectral-based Methods.}
In another line of research, PiSSA~\cite{meng2024pissa} and MiLoRA~\cite{wang2024milora} decompose weights via singular value decomposition. These approaches adapt residual or minor singular components selectively. Compared to these spectral-based methods, our approach preserves angular structure through orthogonal transformations rather than modifying spectral components.

\paragraph{Geometric Approaches.}
Several works exploit geometric structures: hyperbolic networks~\cite{ganea2018hyperbolic,chen2022fully} utilize exponential maps for hierarchical representations, and Lie group methods~\cite{mironenco2024lie,finzi2020generalizing} construct equivariant architectures. Skew orthogonal convolutions~\cite{singla2021skew} use matrix exponentials for training stability. While building from similar mathematical tools, these works focus on designing new architectures rather than proposing new PEFT methods. \begin{figure*}[t]
	\centering
	\includegraphics[width=0.90\textwidth]{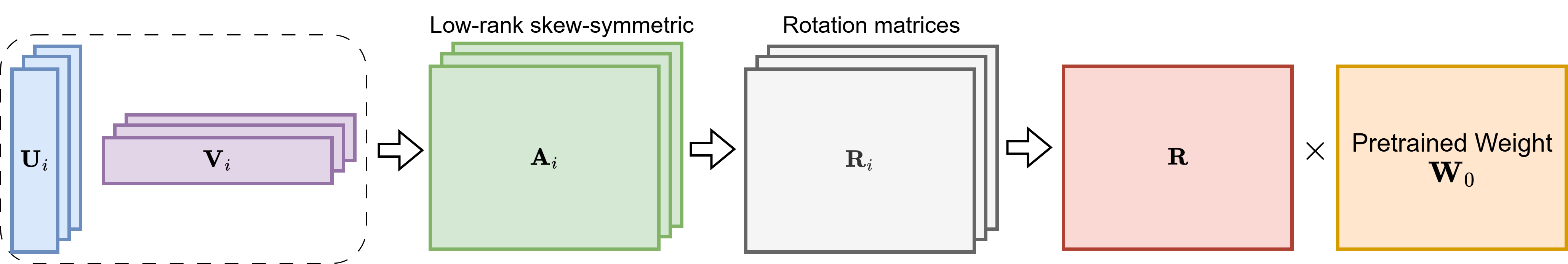}
	\caption{An overview of LoCO. We parameterize each rotation component $\vR_i$ using low-rank skew-symmetric matrices constructed from learnable factors $\vU_i, \vV_i$. We then compose these rotations to form the final orthogonal transformation to adapt pretrained weights.}
	\label{fig:main_figure}
\end{figure*}

\section{Low-rank Compositional Orthogonal Fine-tuning (LoCO)}

This section first reviews rotation groups and their connection to skew-symmetric matrices, and then presents the LoCO adaptation method.

\subsection{Rotation Groups and Skew-symmetric Matrices}

\paragraph{The special orthogonal group.} A rotation in $\RR^d$ is defined as an element of the special orthogonal group:
\begin{equation*}
\SO(d) = \left\{\vR \in \RR^{d \times d}: \vR^\top \vR = \vI, \det(\vR) = 1\right\}.
\end{equation*}
Here, the condition $\vR^\top \vR = \vI$ indicates that rotations preserve distance, while $\det(\vR) = 1$ distinguishes proper rotations from reflections.~\footnote{Reflection transformations have $\det(\vR) = -1$.}
\paragraph{Skew-symmetric matrices as tangent space.} The tangent space to $\SO(d)$ at the identity is given by:

\begin{equation}
	\sso(d) = \left\{\vA \in \RR^{d \times d}: \vA^\top = -\vA \right\}.
\end{equation}

These matrices represent infinitesimal rotations or rotation velocities. The dimension of $\sso(d)$ is $\frac{d (d -1)}{2}$, corresponding to the number of independent rotation axes in $d$ dimensions.

\paragraph{Matrix exponential map.} The exponential map provides a canonical way to generate rotations from skew-symmetric matrices. Every rotation can be obtained from some skew-symmetric matrix. That is, given $\vA \in \sso(d)$, we have
\begin{equation}
	\exp(\vA) = \sum_{n=0}^\infty \frac{\vA^n}{n!} \in \SO(d).
\end{equation}
This provides a \emph{smooth} parameterization of rotation manifold. However, computing the matrix exponential directly is computationally expensive and would be prohibitive for the task of fine-tuning foundation models.

\paragraph{Cayley transform.} The Cayley transform offers an alternative approach which is more tractable for generating rotations from skew-symmetric matrices. Given $\vA \in \sso(d)$, we have:
\begin{equation}
	\vR = (\vI - \vA)^{-1}(\vI + \vA) \in \SO(d).
\end{equation}

The Cayley transform is particularly well-suited in fine-tuning settings. During fine-tuning, we expect model adaptations to be relatively small, corresponding to rotations with limited angular magnitude. The Cayley transform provides an accurate and smooth parameterization of the rotation manifold near the identity. This approach has been successfully adopted in recent orthogonal fine-tuning methods~\cite{qiu2023controlling,liu2024boft}.

\subsection{LoCO Adaptations}
\label{sec:model_method}
This section presents the main specification of \textbf{Lo}w-rank \textbf{C}ompositional \textbf{O}rthogonal Fine-tuning (LoCO). 

Given a pretrained model with weight matrix $\vW_0 \in \mathbb{R}^{k \times d}$, the standard linear transformation can be adapted through orthogonal fine-tuning as follows.
\begin{equation*}
	f_{\textrm{pretrained}}(\vx) = \vW_0 \vx \quad \text{becomes} \quad f_{\textrm{finetune}}(\vx) = \vW_0 \vR \vx,
\end{equation*}
where $\vR \in \mathrm{SO}(d)$ is a special orthogonal matrix that performs rotation transformations on the input features $\vx \in \RR^d$. To enhance model expressivity, we utilize a compositional representation of $\vR$:
\begin{equation*}
	\vR = \vR_1 \vR_2 \cdots \vR_n,
\end{equation*}
where each component $\vR_i \in \mathrm{SO}(d)$ represents an elementary rotation. The composition of orthogonal matrices preserves orthogonality and ensures that $\vR \in \mathrm{SO}(d)$. Orthogonal fine-tuning takes inspiration from the idea that angular features capture the semantic gap well~\cite{chen2020angular,liu2017deep,liu2018decoupled,qiu2023controlling}.

In the following subsections, we detail how to parameterize each component $\vR_i$ using low-rank skew-symmetric matrices and describe the approximation scheme for the compositional rotation $\vR$. Figure~\ref{fig:main_figure} provides an overview of the LoCO framework.

\subsubsection{Skew-symmetric construction} We parameterize skew-symmetric matrices through a low-rank outer product form. Given learnable matrices $\vU, \vV \in \RR ^{d \times r}$ where $r \ll d$ is the rank dimension, we construct

\begin{equation*}
	\vA = \vU \vV^\top - \vV \vU^\top.
\end{equation*}

This formula guarantees skew-symmetry since we can verify that $\vA^\top = \left(\vU \vV^\top\right)^\top - (\vV \vU^\top)^\top = \vV \vU^\top - \vU \vV^\top = -\vA$. For notation convenience, we rewrite this construction using auxiliary matrices:
\begin{equation*}
	\vA = \vX \vY^\top, \text{where} \quad \vX = [\vU \mid -\vV], \quad \vY = [\vV \mid \vU],
\end{equation*}
with $\vX, \vY \in \RR^{d \times 2r}$. 

\paragraph{Efficient computation} The naive computation of the Cayley transform $\vR = (\vI - \vA)^{-1} (\vI + \vA)$ involves inverting a $d \times d$ matrix, which takes $\mathcal{O}(d^3)$ time complexity. This is prohibitively expensive for high-dimensional features. 

We now show how the low-rank structure of $\vA$ enables a reduction in computational cost. We apply the Woodbury matrix identity~\cite{max1950inverting}. For our setting that $\vA = \vX \vY^\top$, this identity leads us to
\begin{equation*}
	(\vI_d - \vX \vY^\top)^{-1} = \vI_{d} + \vX(\vI_{2r} - \vY^\top \vX)^{-1}\vY^\top.
\end{equation*}
In this case, we add subscripts for the identity matrix $\vI$ to indicate the dimension of the vector space. The inversion of a $d\times d$ matrix is now converted to the inversion of a $2r \times 2r$ matrix $(\vI_{2r} - \vY^\top \vX)$, which requires only $\mathcal{O}(r^3)$ operations. 

With a further algebraic manipulation, we arrive at:
\begin{equation}
	\label{eq:actual_adapter}
	\vR = (\vI - \vA)^{-1} (\vI + \vA) = \vI + 2 \vX (\vI - \vY^\top \vX)^{-1} \vY^\top.
\end{equation}

The main computational cost consists of (1) computing $\vY^\top \vX \in \mathbb{R}^{2r \times 2r}$ in $\mathcal{O}(dr^2)$ time, (2) inverting the $2r \times 2r$ matrix in $\mathcal{O}(r^3)$ time, and (3) matrix multiplications in $\mathcal{O}(dr^2)$ time. The total complexity is $\mathcal{O}(dr^2 + r^3)$. Since $r \ll d$ in practice (typically $r \in [1, 4]$), this simplifies to $\mathcal{O}(dr^2)$. {In practice, we adopt input-centric implementation by multiplying the input vector with low-rank matrices thus avoiding large intermediate matrices.}

\paragraph{Comparison with other orthogonal approaches.} Existing orthogonal fine-tuning methods employ alternative structural constraints to manage computational costs: block-diagonal parameterization~\cite{qiu2023controlling} partitions transformations into independent subspaces, while butterfly structure~\cite{liu2024boft} exploits hierarchical factorizations. Our low-rank approach instead enables explicit control over the rotation subspace dimension through the rank parameter $r$.

\subsubsection{Compositional Chain-of-Rotations}
\label{model:Compositional Chain-of-Rotation}

From a theoretical perspective, any rotation in $\RR^d$ can be decomposed into a sequence of Givens rotations in 2-dimensional planes. Specifically, any element of $\SO(d)$ can be written as a product of at most $\frac{d(d - 1)}{2}$ Givens rotations~\cite{press2007numerical}. Our chain-of-rotations framework can be viewed as a learnable, parameter-efficient approximation to this decomposition, where each component rotation $\vR_i$ operates on a distinct low-dimensional subspace determined by $\vU_i, \vV_i$.

We define a chain of $n$ rotations:
\begin{equation*}
	\vR = \vR_1 \vR_2 \cdots \vR_n = \prod_{i=1}^n \left(\vI + 2\vX_i (\vI - \vY_i^\top \vX_i)^{-1}\vY_i^\top\right),
\end{equation*}

where each $\vR_i$ is parameterized by its own pair $\vU_i \in \RR^{d \times r}, \vV_i \in \RR^{d \times r}$. Since the composition of orthogonal matrices remains orthogonal, $\vR \in \SO(d)$ is guaranteed.

To enable efficient parallel computation, we use a first-order approximation by expanding the product and retaining only linear terms:

\begin{equation}
	\vR \approx \vI + 2\sum_{i=1}^n \vX_i (\vI - \vY_i^\top \vX_i)^{-1}\vY_i^\top.
	\label{eq:first_order_approx}
\end{equation}
This approximation replaces sequential matrix multiplication with a sum of low-rank updates, which can be computed in parallel using standard GPU operations.

\paragraph{Approximation error analysis.} Consider rotations where $\normx{\vX_i}{F}, \normx{\vY_i}{F} = \mathcal{O}(\varepsilon)$ for small $\varepsilon > 0$. Each perturbation $\Delta_i = 2\vX_i (\vI - \vY_i^\top \vX_i)^{-1} \vY_i^\top$ has magnitude $\mathcal{O}(\varepsilon^2)$, since it involves the product of $\vX_i$ and $\vY_i^\top$. When expanding the product:
\begin{align}
	\vR &= \prod_{i=1}^n (\vI + \Delta_i) = \vI + \sum_{i=1}^n \Delta_i + \sum_{i < j} \Delta_i \Delta_j + \mathcal{O}(\varepsilon^6).
\end{align}
Here, the second-order cross terms $\Delta_i \Delta_j$ are $\mathcal{O}(\varepsilon^4)$ and high-order terms are at least $\mathcal{O}(\varepsilon^6)$. During fine-tuning, where parameters are initialized near zero and remain small throughout training, these higher-order terms contribute negligibly to the transformation. The orthogonality approximation error is bounded by \Cref{thm:upper_bound_dev} (see \Cref{sec:app_error} for the full proof).

\paragraph{Orthogonality.} We numerically evaluate the first-order approximation by measuring how well it preserves vector norms. For randomly initialized vectors $\vx$, we compute the relative error $|\|\vx\| - \|\vR\vx\|| / \|\vx\|$ across $\|\vX_i\|_F = \|\vY_i\|_F = \varepsilon$ for $\varepsilon \in [10^{-6}, 10^{-0.5}]$, a range typical of fine-tuning perturbations. Figure~\ref{fig:error_analysis} shows that our approximation effectively preserves vector magnitude (ideally $\|\vR\vx\| = \|\vx\|$) across the range of $\varepsilon$ values.

\begin{figure}[h]
	\centering
	\includegraphics[width=0.45\textwidth]{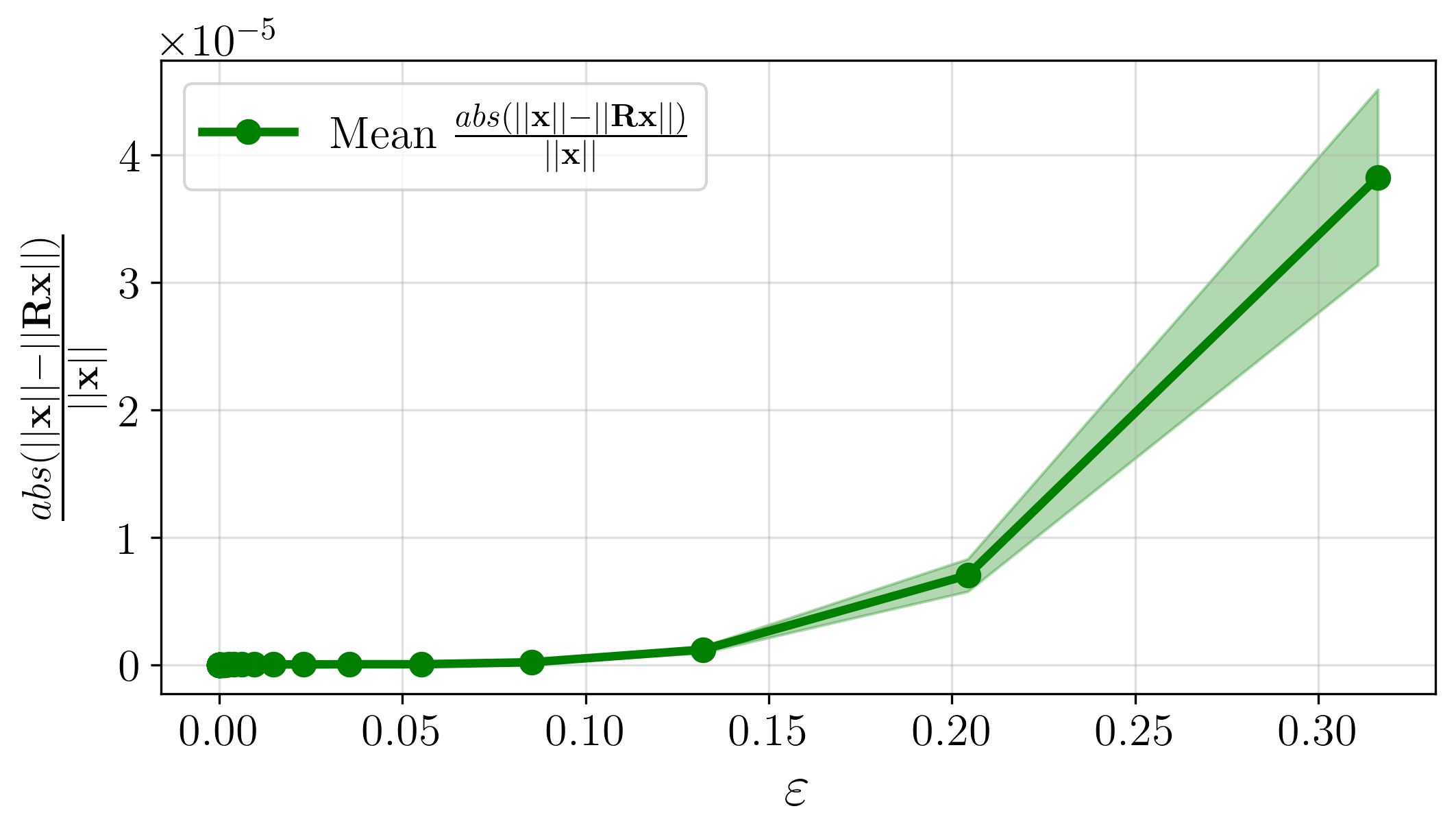}
	\caption{Relative error $|\normx{\vx}{} - \normx{\vR\vx}{} | / \normx{\vx}{}$ in first-order approximations in Equation~\eqref{eq:first_order_approx}. This illustrates that our first-order approximation can preserve vector magnitude under linear transformation $\vR$. In this experiment, we vary $\normx{\vX}{} = \normx{\vY}{}$ across a range of $\varepsilon \in [10^{-6}, 10^{-0.5}]$. The shaded region indicates the standard deviation across multiple random initializations of $\vX$ and $\vY$.}
	\label{fig:error_analysis}
\end{figure}

\paragraph{Remark.} The first-order approximation enables all $n$ rotations to be computed independently and summed, fully utilizing GPU parallelism, while simplifying backpropagation compared to sequential matrix products. While compositional orthogonal transformations appear in BOFT~\cite{liu2024boft} (butterfly factorization) and reflection-based methods~\cite{yuan2024bridging}, neither leverages parallel computation. BOFT uses sequential butterfly stages, and reflection chains require multiple compositions. Our outer-product parameterization with first-order approximation uniquely enables fully parallel execution. This is also reflected by our experimental validation in Section~\ref{sec:experimental_results}.

 \section{Experimental Results}
\label{sec:experimental_results}

We evaluate LoCO on three tasks: fine-tuning large language models, diffusion transformers, and vision transformers.

\subsection{Large Language Model Adaptation}
\label{sec_llm}
We examine LoCO on two tasks: natural language understanding (NLU) and mathematical reasoning.

\paragraph{Baselines.} We compare against orthogonal fine-tuning methods: (a) OFT~\cite{qiu2023controlling}, which uses block-diagonal orthogonal matrices; (b) BOFT~\cite{liu2024boft}, which employs butterfly factorizations to construct dense orthogonal adapters; and (c) HRA~\cite{yuan2024bridging}, which chains Householder reflections~\cite{householder1958unitary}.

\subsubsection{Natural Language Understanding}
\label{exp:llm:nlu}
Following~\cite{qiu2023controlling,liu2024boft,yuan2024bridging}, we fine-tune DeBERTa-V3-base~\cite{he2021deberta} on GLUE benchmark tasks.

\begin{table*}[htbp]
    \centering
    \footnotesize \setlength{\tabcolsep}{5pt} 
\begin{tabular}{llc|cccccccc|c}
        \toprule
        \textbf{Method} & \textbf{Config} & \textbf{\#Params} & \textbf{MNLI} & \textbf{SST-2} & \textbf{MRPC} & \textbf{CoLA} & \textbf{QNLI} & \textbf{QQP} & \textbf{RTE} & \textbf{STS-B} & \textbf{Mean} \\
        \midrule
        OFT  & b=24    & 0.954M & \textbf{90.27} & 95.83          & 87.93          & 67.63          & 94.30          & 89.77          & 84.27         & 91.00          & 87.63 \\
        BOFT & r=2,b=4 & 0.746M & \textbf{90.27} & 96.00          & 87.07          & \textbf{68.20} & 94.07          & 89.73          & 81.43         & 91.13          & 87.24 \\
        HRA  & r=8     & 0.663M & 90.20          & 95.70          & 88.03          & 51.67          & 93.57          & \textbf{90.20} & 84.50         & 91.25          & 85.64 \\
        \specialrule{0.01pt}{2pt}{2pt}
        \textbf{LoCO (ours)} & n=1,r=4 & 0.663M & 90.20          & \textbf{96.14} & \textbf{88.28} & 65.56          & \textbf{94.36} & 90.08          & \textbf{86.10} & \textbf{91.28} & {\textbf{{87.75}}} \\
        \bottomrule
    \end{tabular}
    \vspace{5pt} \caption{Experimental results on GLUE benchmark. The best results on each dataset are shown in \textbf{bold}. \textbf{\#Params} refers to the total number of trainable parameters in the backbone side - excluding the classification head layers which are trained in a standard way for each task.}
    \label{tab:glue_deberta}
\end{table*} \paragraph{Setup.} We adopt the experimental setup of BOFT and HRA, applying each method to all linear modules within transformer layers. 
We increase OFT's block size from $b{=}16$ to $b{=}24$ to match parameter counts, and use HRA's best configuration ($\lambda{=}0$). Additional details are in Appendix~\ref{appendix:nlu_exps}.

\paragraph{Results.} Table~\ref{tab:glue_deberta} reports matched accuracy for MNLI, Matthews correlation for CoLA, Pearson correlation for STS-B, and accuracy for other tasks. LoCO achieves the best average score (87.75), outperforming HRA (85.64) while using 30\% and 10\% fewer parameters than OFT and BOFT, respectively. LoCO ranks first on 5 out of 8 tasks (SST-2, MRPC, QNLI, RTE, STS-B). Notably, HRA shows a substantial drop on CoLA, suggesting Householder reflections may be less stable across diverse linguistic tasks than orthogonal rotations.

\subsubsection{Mathematical Reasoning}
\label{exp:llm_math}
\paragraph{Setup.} We fine-tune LLaMA2-7B~\cite{touvron2023llama2openfoundation} on MetaMathQA-40K~\cite{yu2023metamath}, adapting only the query and value projections ($W_q$, $W_v$). All baselines are reproduced under identical settings~\cite{yuan2024bridging,liu2024boft} and evaluated on the GSM8K~\cite{cobbe2021gsm8k} and MATH~\cite{hendryckstest2021} validation sets.

\begin{table}[H]
    \centering
\resizebox{0.95\columnwidth}{!}{\begin{tabular}{llccc}
        \toprule
        \textbf{Method} & \textbf{Config} & \textbf{\%Param} & \textbf{GSM8K} & \textbf{MATH} \\
        \midrule
        Llama-2-7B & -- & -- & 14.6 & 2.5 \\
        \midrule
        OFT  & $b{=}64$       & 0.12 & 49.35 & 8.39 \\
        BOFT & $m{=}2, b{=}8$ & 0.13 & 45.58 & 6.85 \\        
        HRA  & $r{=}32$       & 0.12 & 50.04 & 7.45 \\
        \midrule
        \textbf{LoCO (ours)} & $n{=}1, r{=}16$ & 0.12 & \textbf{50.19} & \textbf{8.40} \\
        \bottomrule
    \end{tabular}}
    \vspace{1pt}
    \caption{Experimental results on mathematical reasoning tasks using Llama-2-7B. We report the accuracy (\%) on GSM8K and MATH benchmarks. The best results are highlighted in \textbf{bold}.}
    \label{tab:math_reasoning}
\end{table}
 
\paragraph{Results.} As shown in Table~\ref{tab:math_reasoning}, LoCO achieves the highest GSM8K score (50.19), outperforming OFT (49.35), HRA (49.65), and BOFT (45.58) with only 0.12\% trainable parameters. On MATH, LoCO matches OFT while surpassing BOFT and HRA. These results suggest that low-rank skew-symmetric parameterization captures mathematical reasoning patterns more effectively than block-diagonal (OFT/BOFT) or Householder (HRA) structures.

\subsubsection{Computational Efficiency}

We compare training efficiency by measuring time per step and peak memory after 100 warm-up steps, averaged over 300 iterations across 3 trials. Experiments are conducted on DeBERTa-V3 (184M) and LLaMA2-7B with matched parameter counts. Details are provided in Appendix~\ref{appendix:computation_efficiency}.

\begin{figure}[htbp]
	\centering
	\includegraphics[width=0.95\linewidth]{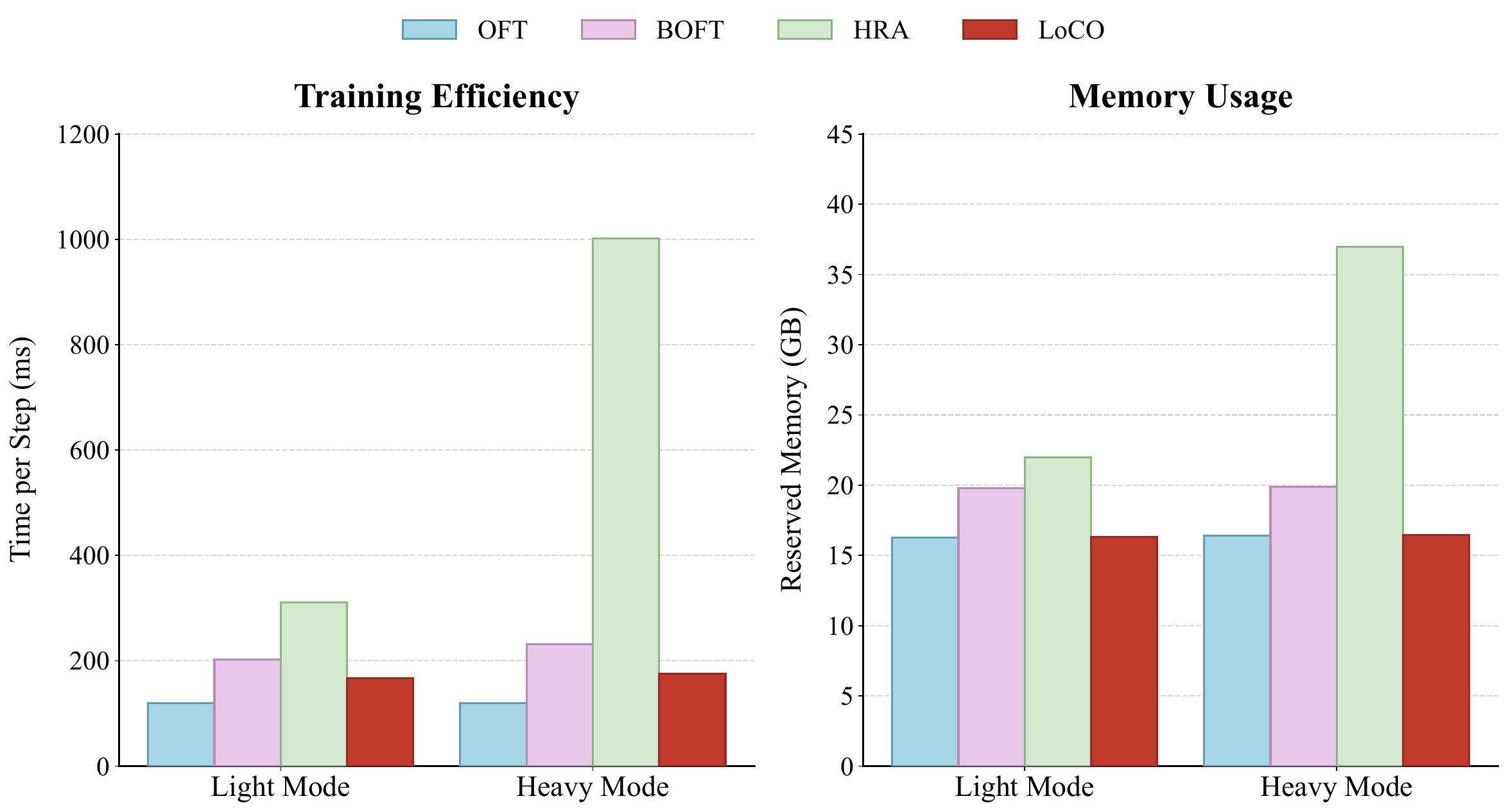}
	\caption{\label{fig:deberta}Training efficiency comparison on DeBERTA-V3.}
\end{figure}

As shown in Figure~\ref{fig:deberta}, LoCO matches the efficiency of sparse OFT while outperforming dense methods (HRA, BOFT). Under heavy configurations, HRA incurs 160\% longer step time and 20\% higher memory usage, whereas LoCO maintains stable throughput. This efficiency stems from two factors: (1) the Sherman–Morrison–Woodbury formula reduces matrix inversion to $O(r^3)$ complexity, and (2) all rotation components compute in parallel, unlike HRA's sequential backward pass or BOFT's non-contiguous memory access patterns~\cite{dao2022monarch}.

\paragraph{Summary.} Across NLU and mathematical reasoning benchmarks, LoCO achieves competitive or superior performance while maintaining both parameter efficiency and computational scalability.

\subsection{Fine-tuning Diffusion Transformers}
\label{sec:diffusion}
\paragraph{Setup.} 
We follow the setup of~\cite{tan2025ominicontrol} focusing on conditional generation tasks. Similar to the OminiControl setup, we use FLUX.1~\cite{flux2024} as the pre-trained model. We conduct all experiments on 4 NVIDIA H200 GPUs. To match the total samples of OminiControl~\cite{tan2025ominicontrol}, we train our models for 25,000 iterations (half of the setting of~\cite{tan2025ominicontrol} when running on 2 GPUs).

\textbf{Baselines.} We compare our approach against two representative baseline methods: LoRA~\cite{hu2022lora}---the default adapter of OminiControl~\cite{tan2025ominicontrol}, serving as a non-orthogonal adapter---and butterfly orthogonal fine-tuning (BOFT)~\cite{liu2024boft} as an orthogonal adapter. All baseline experiments utilize the official OminiControl implementation.
We extend this framework to incorporate BOFT~\cite{liu2024boft} into the FLUX pipeline. Our LoCO adapter is implemented within the HuggingFace PEFT library to ensure seamless integration with the existing codebase.

We evaluate performance using established metrics: Fréchet Inception Distance (FID)~\cite{fid}, Structural Similarity Index (SSIM), CLIP-IQA~\cite{clip_iqa}, MANIQA~\cite{maniqa}, MUSIQ~\cite{musiq}, and Peak Signal-to-Noise Ratio (PSNR)~\cite{psnr}. For controllability assessment, we adopt the OminiControl approach, measuring the F1 score between extracted and input edge maps for edge-conditioned generation, and Mean Squared Error (MSE) between extracted and original condition maps for other tasks. To evaluate consistency between prompts and generated images, we use CLIP Text-Image similarity. Following OminiControl~\cite{tan2025ominicontrol}, we evaluate on 5,000 images from the COCO dataset, with prompts extracted from corresponding image captions.

\begin{table*}[t]
	\centering
	
	\resizebox{\textwidth}{!}{
		\begin{tabular}{ll|c|cccccc|cc}
			\toprule
			\multirow{2}{*}{\textbf{Task}} & \multirow{2}{*}{\textbf{Methods}} &
			{\textbf{Controllability}} &
			\multicolumn{6}{c|}{\textbf{Image Quality}} &
			\multicolumn{2}{c}{\textbf{Alignment}} \\
			\cmidrule(lr){3-3} \cmidrule(lr){4-9} \cmidrule(lr){10-11}
			& & F1↑ / MSE↓ & FID↓ & SSIM↑ & CLIP-IQA↑ & MAN-IQA↑ & MUSIQ↑ & PSNR↑ & CLIP Text↑ & CLIP Image↑ \\
			\midrule
			
			\multirow{3}{*}{Canny}
			& LoRA & \textbf{0.52} & 25.12 & \textbf{0.45}& 0.62 & 0.57 &74.22 & \textbf{11.44} & 0.331 & 0.798\\
			& BOFT  & 0.4 & 28.71 & 0.39 & 0.65 & 0.58 & 73.82 & 10.89 & 0.308 & \textbf{0.799} \\
			
& \textbf{LoCO (ours)}  & 0.49 & \textbf{25.00} & 0.44 & \textbf{0.66} & \textbf{0.61} & \textbf{74.24} & 11.17 & \textbf{0.336} & 0.788 \\
			\midrule
			
			\multirow{4}{*}{Depth}
			& LoRA  & \textbf{624} & \textbf{29.98} & 0.31& 0.62 & 0.52 & 71.65 & 11.33 & 0.315 & 0.729 \\
			& BOFT  & 752 & 31.29 & 0.32 & 0.59 & 0.50 & 71.99 & \textbf{11.35} & 0.304 & 0.726 \\
			
& \textbf{LoCO (ours)}  & 723 & 32.01 & \textbf{0.36} & \textbf{0.63} & \textbf{0.53} & \textbf{72.50} & 9.85 & \textbf{0.319} & \textbf{0.741} \\
			\midrule
			
			\multirow{3}{*}{Mask}
			& LoRA & \textbf{6402} & 10.11 & 0.62 & 0.58 & 0.47 & 68.81 & \textbf{18.31} & \textbf{0.292} & 0.812 \\
			& BOFT  & 7482 & \textbf{10.06} & 0.55 & 0.51 & 0.49 & 66.31 & 18.01 & 0.283 & 0.801 \\
& \textbf{LoCO (ours)}  & 6931 & 10.14 & \textbf{0.63} & \textbf{0.59} & \textbf{0.50} & \textbf{69.80} & 18.08 & 0.290 & \textbf{0.824} \\
			\midrule

			\multirow{3}{*}{Colorization}
			& LoRA  & \textbf{103} & 10.51 & 0.91 & \textbf{0.54 }& \textbf{0.45} & \textbf{67.33}  & \textbf{22.30} & 0.298 & 0.826 \\
			& BOFT  & 121 & \textbf{10.01} & \textbf{0.92} & 0.51 & 0.42 & 66.32 & 22.19 & 0.297 & 0.812 \\
			
& L\textbf{LoCO (ours)} & 106 & 10.33 & 0.90 & 0.50 & 0.44 & 66.44 & 21.26 & \textbf{0.310} & 0.831 \\

			\midrule
			\multirow{3}{*}{Deblur}
			& LoRA  & \textbf{79} & 20.27 & \textbf{0.65} & 0.41 & 0.35 & 58.75 & \textbf{23.95} & 0.290 & 0.812 \\
			& BOFT  & 87 & 22.94 & 0.60 & 0.49 & 0.40 & 57.95 & 20.78 & 0.304 & 0.803 \\
			
& \textbf{LoCO (ours)}  & 83 & \textbf{20.06} & 0.61 & \textbf{0.54} & \textbf{0.48} & \textbf{65.85} & 21.79 & \textbf{0.318} & \textbf{0.832} \\
			
			\bottomrule
	\end{tabular}}
	\caption{Quantitative comparison across different controllable generation tasks. 
		$\uparrow$ indicates higher is better, $\downarrow$ indicates lower is better.}
	\label{tab:spatialy_align}
\end{table*}

\paragraph{Results.} Table~\ref{tab:spatialy_align} presents quantitative results across spatially-aligned controllable generation tasks. Our method demonstrates competitive performance across the evaluation landscape. Specifically, on image quality metrics (CLIP-IQA, MAN-IQA, MUSIQ), our approach consistently achieves performance comparable to or exceeding the baselines. More significantly, our method exhibits superior alignment fidelity as measured by CLIP Image scores across the majority of tasks. For instance, on the Mask task, our method achieves a CLIP Image score of 0.824, outperforming BOFT (0.801) and LoRA (0.812). Similarly, on the Deblur task, we obtain 0.832 compared to 0.803 for BOFT. These results indicate that our low-rank compositional rotation approach effectively preserves semantic alignment. Qualitative examples of these methods are provided in Appendix~\ref{appendix:diffusion}.

\paragraph{Test-time adjustable generation.}

We introduce a temperature parameter $t$ for the skew-symmetric matrix $\vA$, whereby the rotation matrix is computed as $\vR = (\vI - \vA t)^{-1}(\vI + \vA t)$. During training, we use $t=1$. At inference time, $t$ can be adjusted to control the degree of adaptation. Since the scaled matrix $\vA t$ remains within the tangent space $\sso(d)$ for varying values of $t$, the resulting transformations continue to yield semantically meaningful outputs in the generated images. This property is preserved because the Cayley transformation provides a good approximation to $\exp(\vA t)$ given a small magnitude of $\vA$. 

This adjustable inference mechanism described here has not been explored by prior orthogonal fine-tuning approaches~\cite{qiu2023controlling,liu2024boft}. Furthermore, orthogonal fine-tuning methods based on reflection transformations, such as Householder-based approaches~\cite{yuan2024bridging}, do not possess this capability due to their different geometric structure.

Figure~\ref{fig:varying_t_main} illustrates the visual quality of generated images across a range of temperature values $t \in [0, 2]$. Additional figures demonstrating generated images beyond this range using LoCO are provided in the Appendix. Compared with images generated by varying the scaling parameter $\alpha$ in LoRA for OminiControl (the two bottom rows in Figure~\ref{fig:varying_t_main}), our method exhibits robustness and stability across different scaling factors. This is verified quantitatively by image quality metrics in Figure~\ref{fig:vary_t_2}. The generation quality when varying $t$ is more robust and less brittle than LoRA.
\newcommand{\bigarrow}[6]{
\shade[left color=#5, right color=#6, middle color=cyan!45, rounded corners=2pt]
	(0,-#2) -- (0,#2) -- (#1,#2) -- (#1,#4) --
	({#1+#3},0) -- (#1,-#4) -- (#1,-#2) -- cycle;
}

\begin{figure}[h]
	\centering
\newcommand{\imgspacing}{0.8}
	{
	\begin{tikzpicture}
	\begin{scope}
\node (img1) at (0*\imgspacing, 0) {\includegraphics[width=0.04\textwidth]{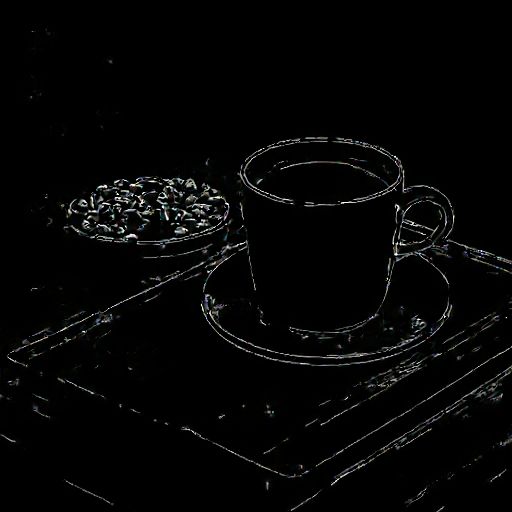}};
	\node (img2) at (1*\imgspacing, 0) {\includegraphics[width=0.04\textwidth]{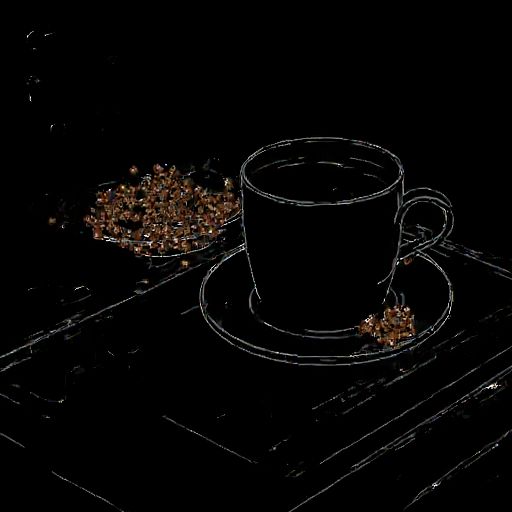}};
	\node (img3) at (2*\imgspacing, 0) {\includegraphics[width=0.04\textwidth]{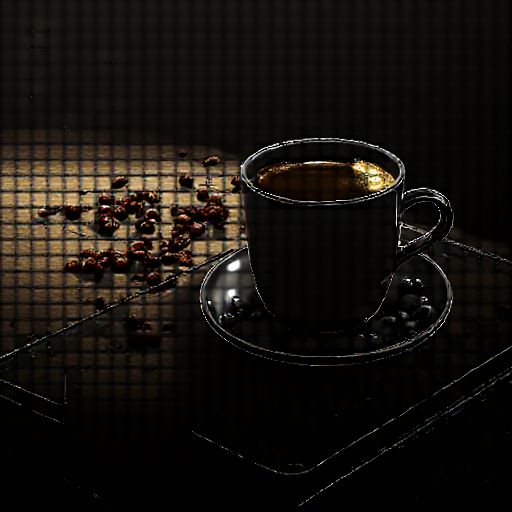}};
	\node (img4) at (3*\imgspacing, 0) {\includegraphics[width=0.04\textwidth]{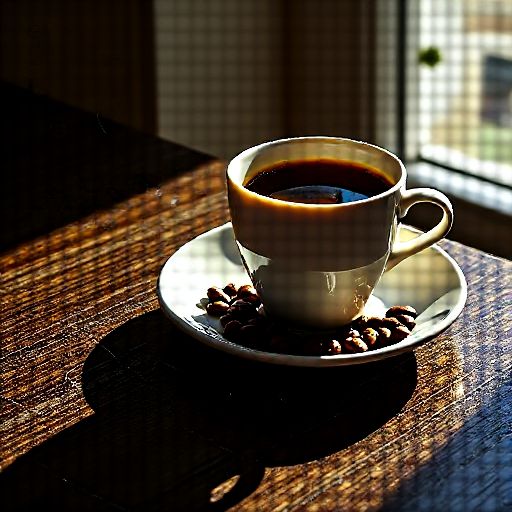}};
	\node (img5) at (4*\imgspacing, 0) {\includegraphics[width=0.04\textwidth]{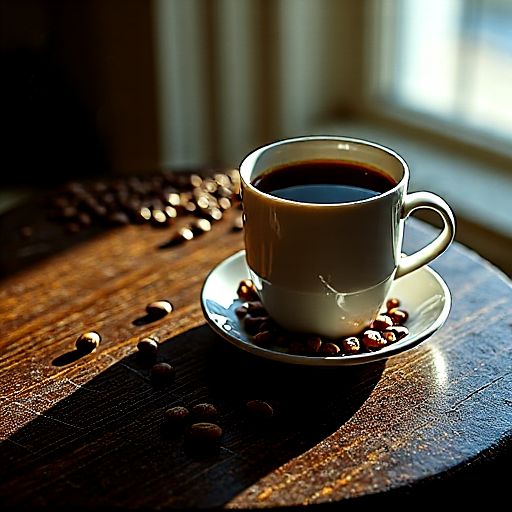}};
	\node (img6) at (5*\imgspacing, 0) {\includegraphics[width=0.04\textwidth]{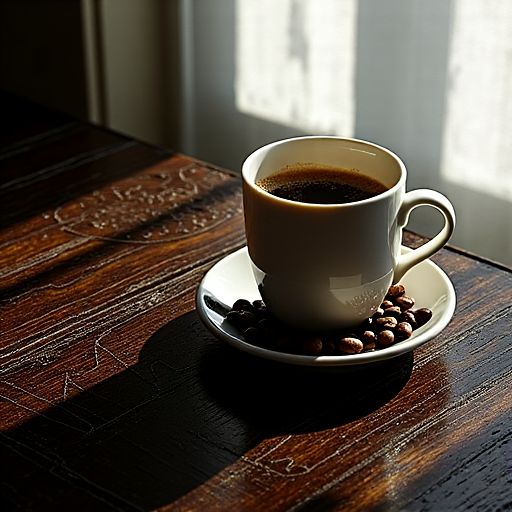}};
	\node (img7) at (6*\imgspacing, 0) {\includegraphics[width=0.04\textwidth]{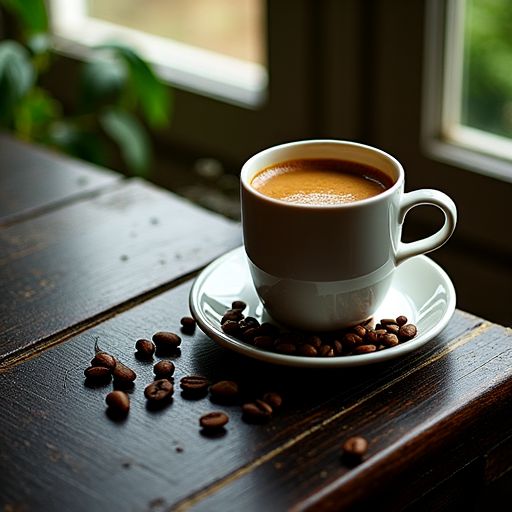}};
	\node (img8) at (7*\imgspacing, 0) {\includegraphics[width=0.04\textwidth]{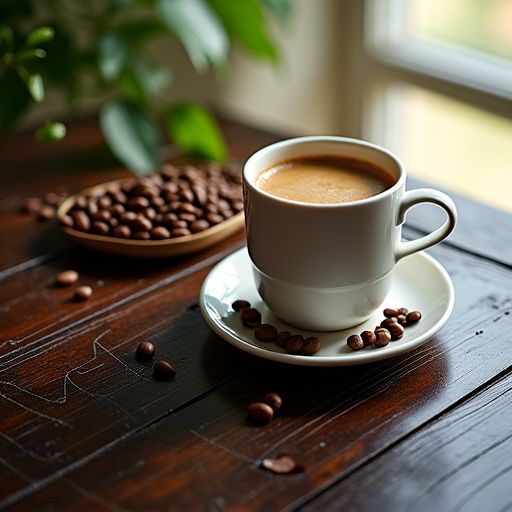}};
	\node (img9) at (8*\imgspacing, 0) {\includegraphics[width=0.04\textwidth]{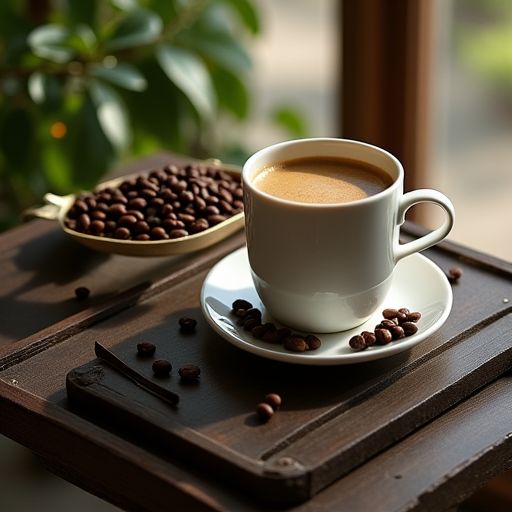}};
	\node (img10) at (9*\imgspacing, 0) {\includegraphics[width=0.04\textwidth]{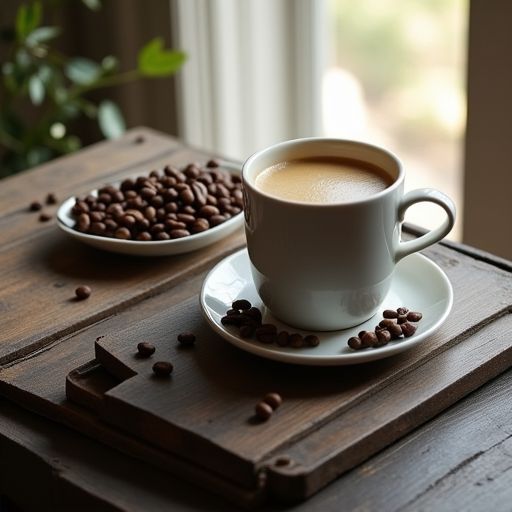}};

	\begin{scope}[shift={(-0.4, -0.5)}]
	\node at (0.35, -0.3) {$t=0$};
	\bigarrow{7.6}{0.1}{0.4}{0.15}{cyan!15}{cyan!75}
	\node at (7.6, -0.3) {$t=1$};
	\end{scope}
	\end{scope}
	
	\begin{scope}[shift={(0,-1.4)}]
\node (img1) at (0*\imgspacing, 0) {\includegraphics[width=0.04\textwidth]{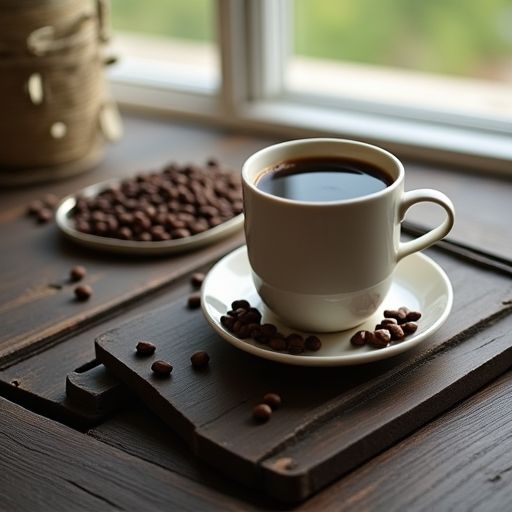}};
	\node (img2) at (1*\imgspacing, 0) {\includegraphics[width=0.04\textwidth]{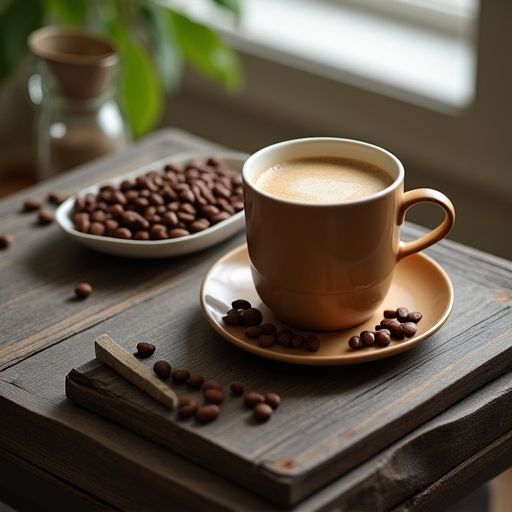}};
	\node (img3) at (2*\imgspacing, 0) {\includegraphics[width=0.04\textwidth]{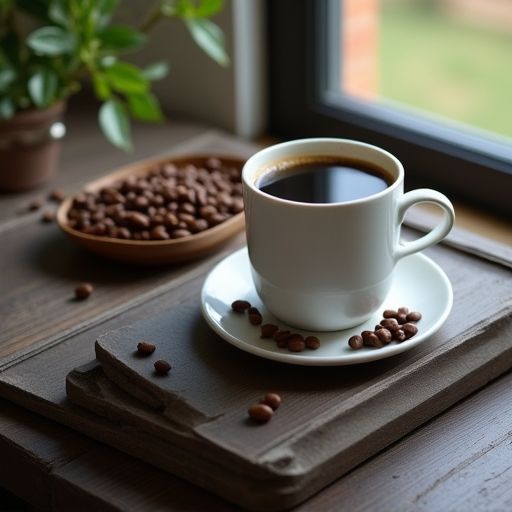}};
	\node (img4) at (3*\imgspacing, 0) {\includegraphics[width=0.04\textwidth]{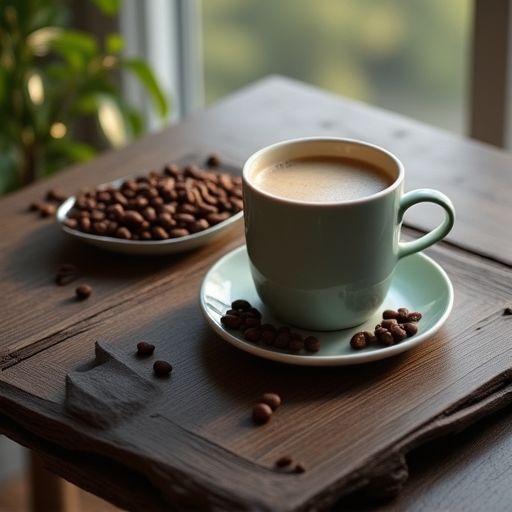}};
	\node (img5) at (4*\imgspacing, 0) {\includegraphics[width=0.04\textwidth]{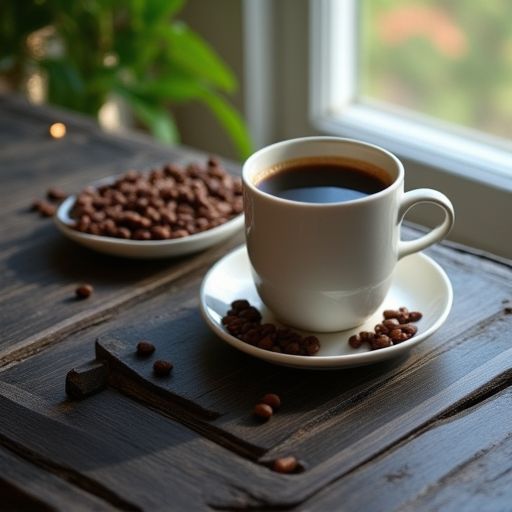}};
	\node (img6) at (5*\imgspacing, 0) {\includegraphics[width=0.04\textwidth]{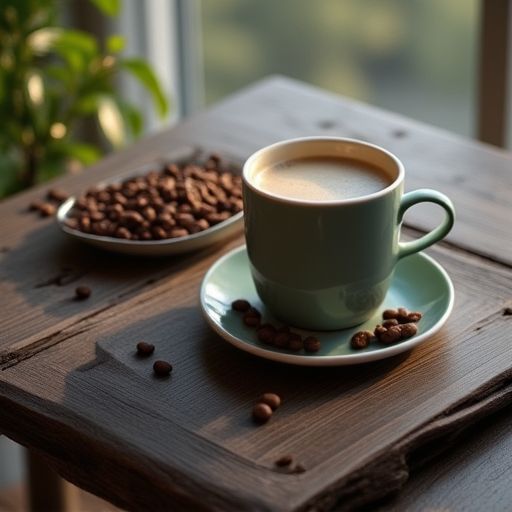}};
	\node (img7) at (6*\imgspacing, 0) {\includegraphics[width=0.04\textwidth]{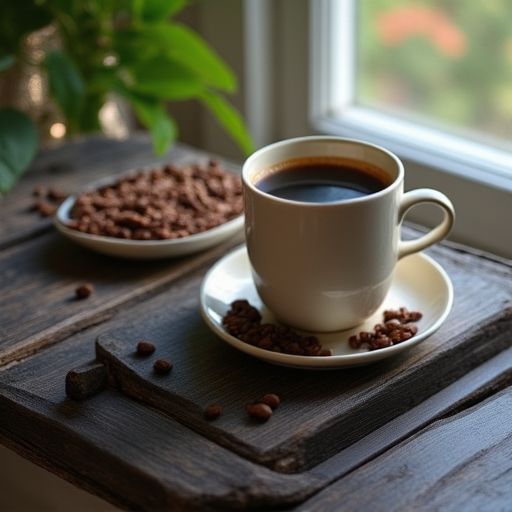}};
	\node (img8) at (7*\imgspacing, 0) {\includegraphics[width=0.04\textwidth]{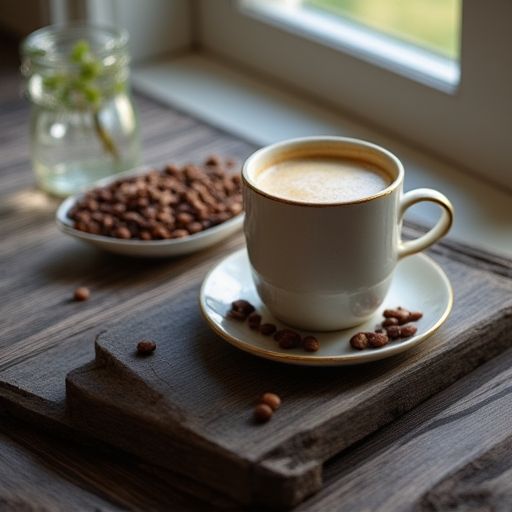}};
	\node (img9) at (8*\imgspacing, 0) {\includegraphics[width=0.04\textwidth]{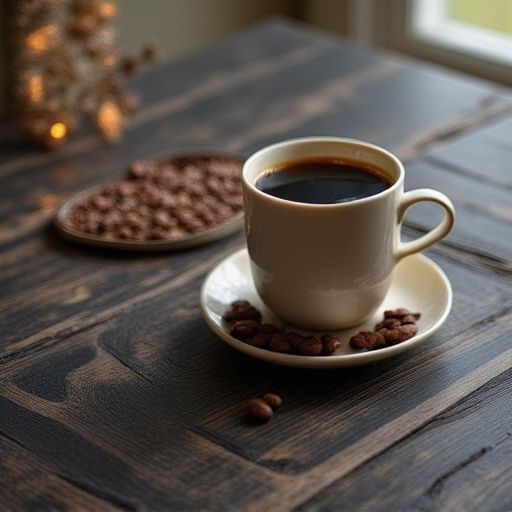}};
	\node (img10) at (9*\imgspacing, 0) {\includegraphics[width=0.04\textwidth]{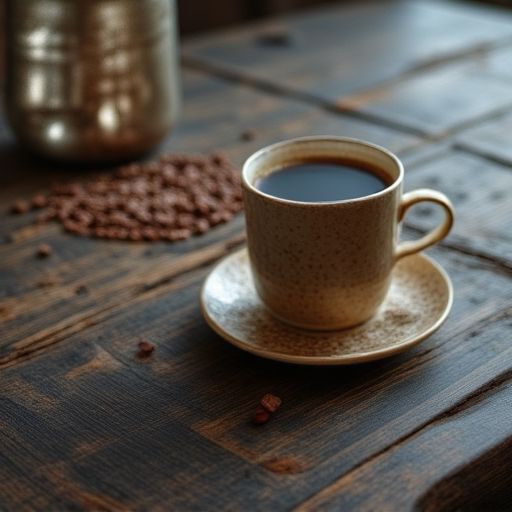}};

	\begin{scope}[shift={(-0.4, -0.5)}]
	\node at (0.35, -0.3) {$t=1$};
	\bigarrow{7.6}{0.1}{0.4}{0.15}{cyan!15}{cyan!75}
	\node at (7.6, -0.3) {$t=2$};
	\end{scope}
	\end{scope}

\begin{scope}[shift={(0,-2.8)}]
\node (img1) at (0*\imgspacing, 0) {\includegraphics[width=0.04\textwidth]{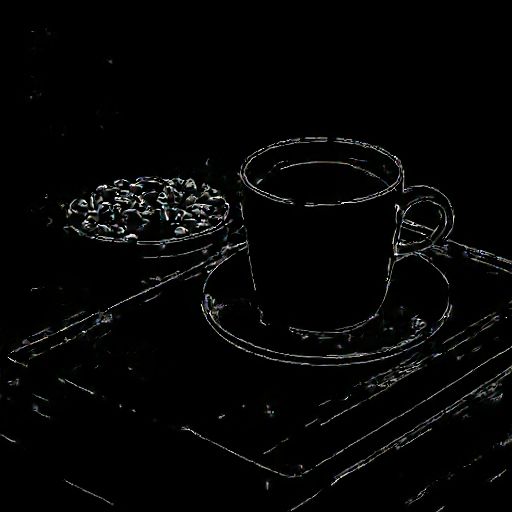}};
	\node (img2) at (1*\imgspacing, 0) {\includegraphics[width=0.04\textwidth]{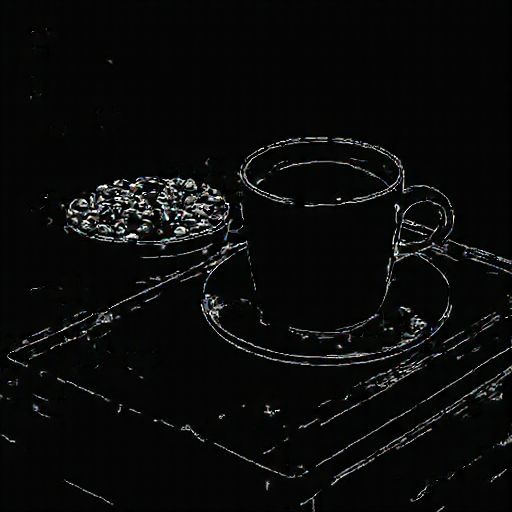}};
	\node (img3) at (2*\imgspacing, 0) {\includegraphics[width=0.04\textwidth]{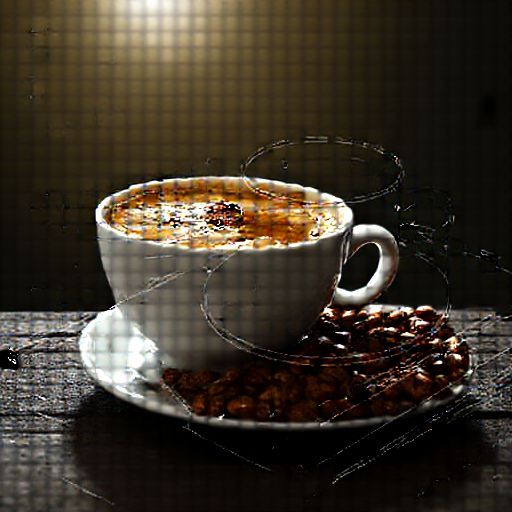}};
	\node (img4) at (3*\imgspacing, 0) {\includegraphics[width=0.04\textwidth]{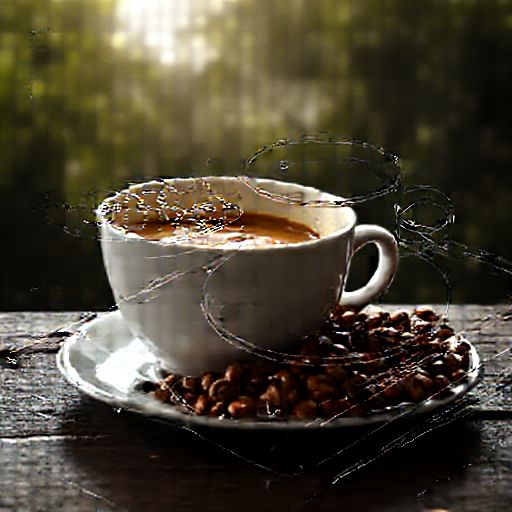}};
	\node (img5) at (4*\imgspacing, 0) {\includegraphics[width=0.04\textwidth]{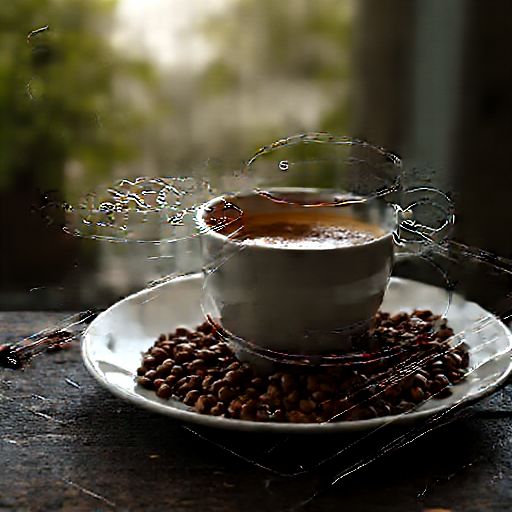}};
	\node (img6) at (5*\imgspacing, 0) {\includegraphics[width=0.04\textwidth]{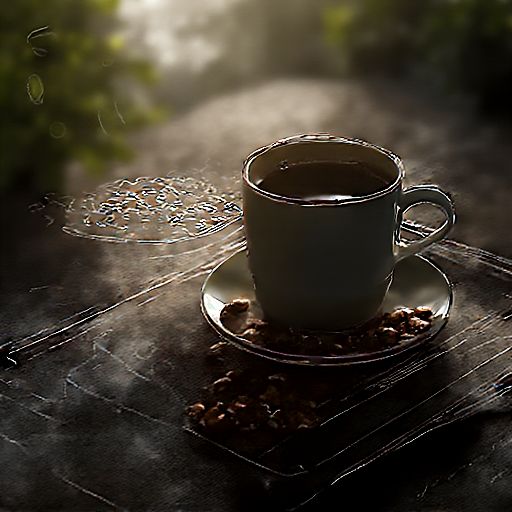}};
	\node (img7) at (6*\imgspacing, 0) {\includegraphics[width=0.04\textwidth]{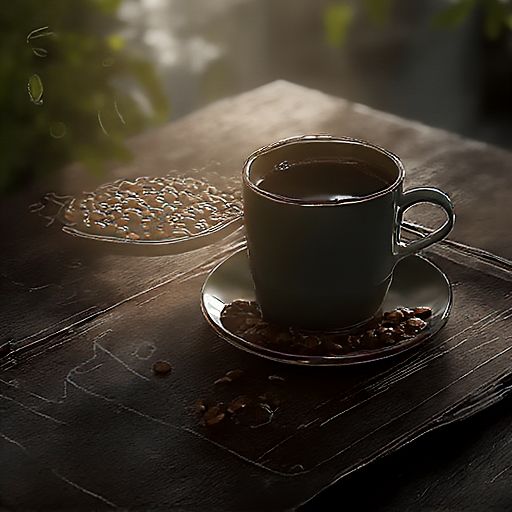}};
	\node (img8) at (7*\imgspacing, 0) {\includegraphics[width=0.04\textwidth]{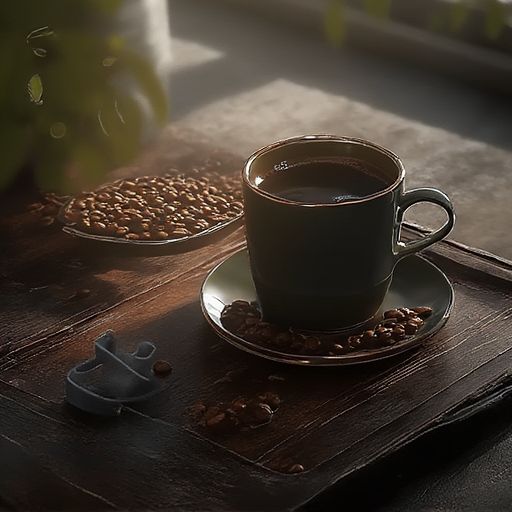}};
	\node (img9) at (8*\imgspacing, 0) {\includegraphics[width=0.04\textwidth]{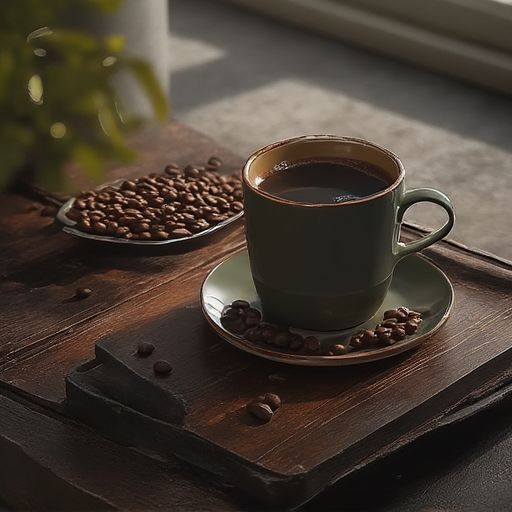}};
	\node (img10) at (9*\imgspacing, 0) {\includegraphics[width=0.04\textwidth]{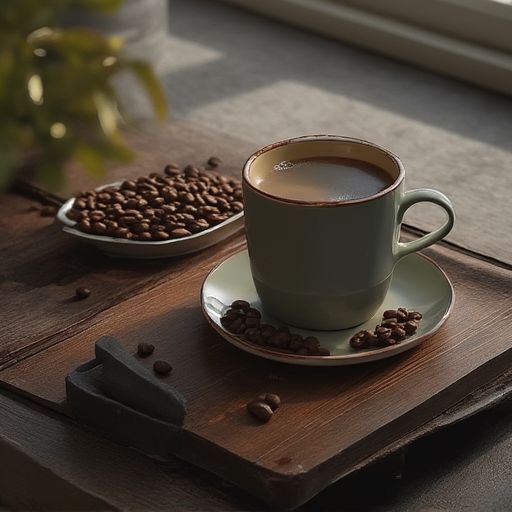}};

	\begin{scope}[shift={(-0.4, -0.5)}]
	\node at (0.35, -0.3) {$\alpha=0$};
	\bigarrow{7.6}{0.1}{0.4}{0.15}{cyan!15}{cyan!75}
	\node at (7.6, -0.3) {$\alpha=1$};
	\end{scope}
	\end{scope}
	
	\begin{scope}[shift={(0,-4.2)}]
\node (img1) at (0*\imgspacing, 0) {\includegraphics[width=0.04\textwidth]{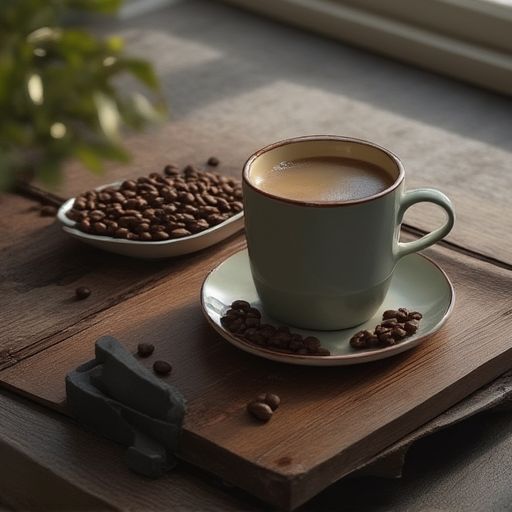}};
	\node (img2) at (1*\imgspacing, 0) {\includegraphics[width=0.04\textwidth]{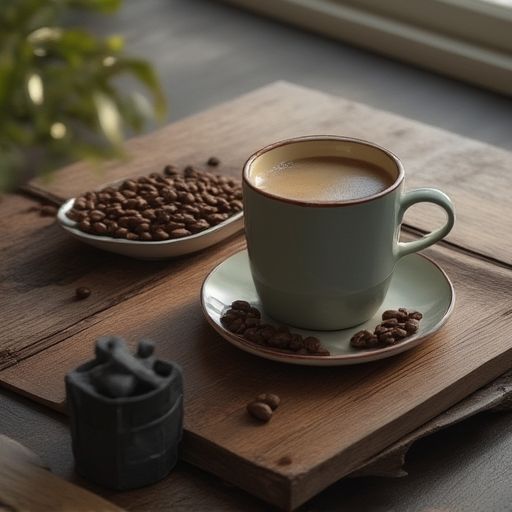}};
	\node (img3) at (2*\imgspacing, 0) {\includegraphics[width=0.04\textwidth]{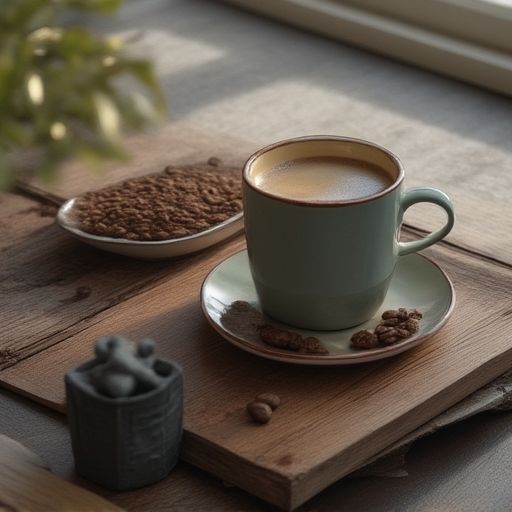}};
	\node (img4) at (3*\imgspacing, 0) {\includegraphics[width=0.04\textwidth]{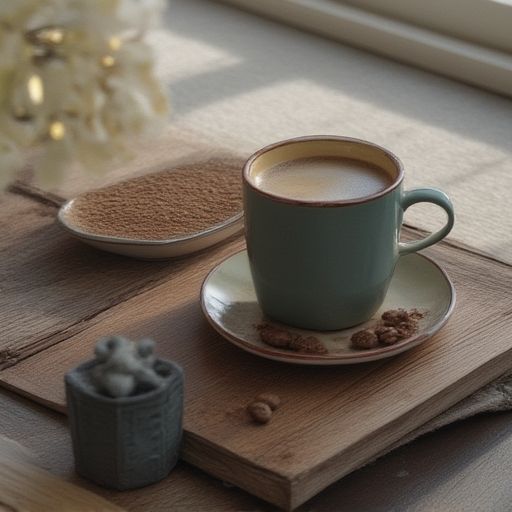}};
	\node (img5) at (4*\imgspacing, 0) {\includegraphics[width=0.04\textwidth]{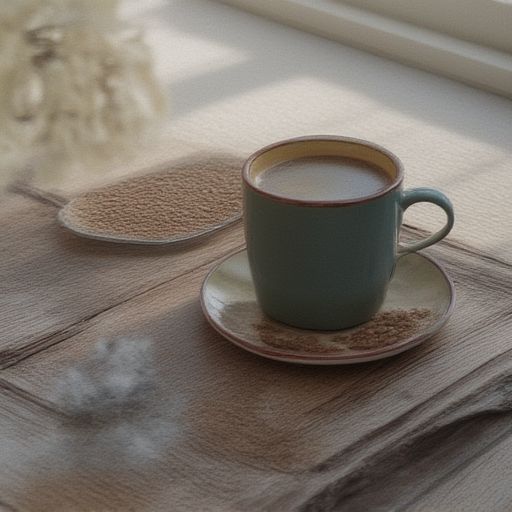}};
	\node (img6) at (5*\imgspacing, 0) {\includegraphics[width=0.04\textwidth]{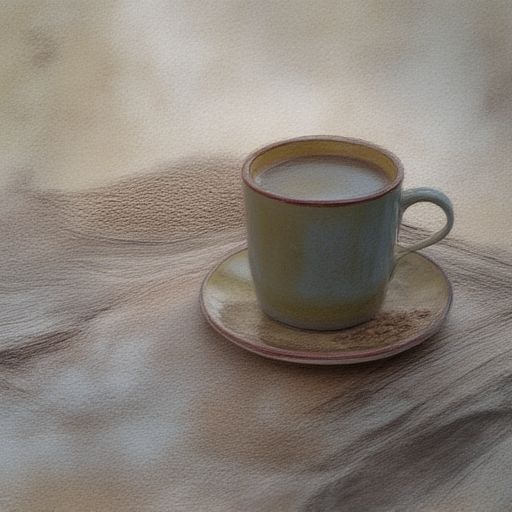}};
	\node (img7) at (6*\imgspacing, 0) {\includegraphics[width=0.04\textwidth]{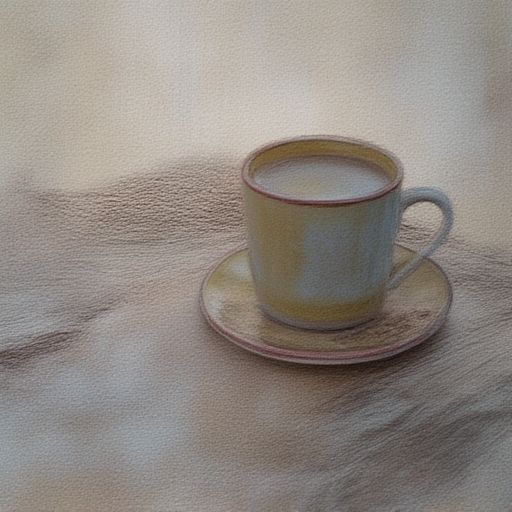}};
	\node (img8) at (7*\imgspacing, 0) {\includegraphics[width=0.04\textwidth]{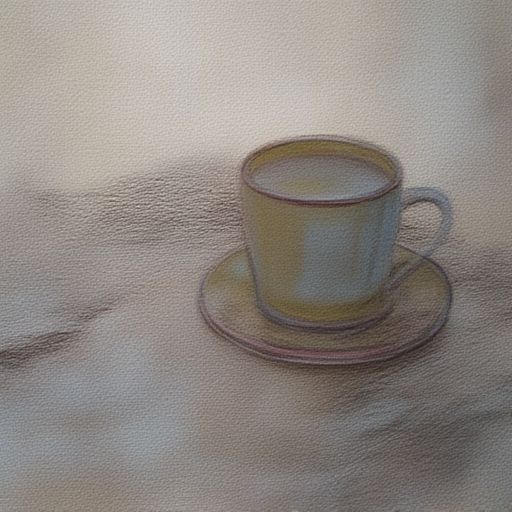}};
	\node (img9) at (8*\imgspacing, 0) {\includegraphics[width=0.04\textwidth]{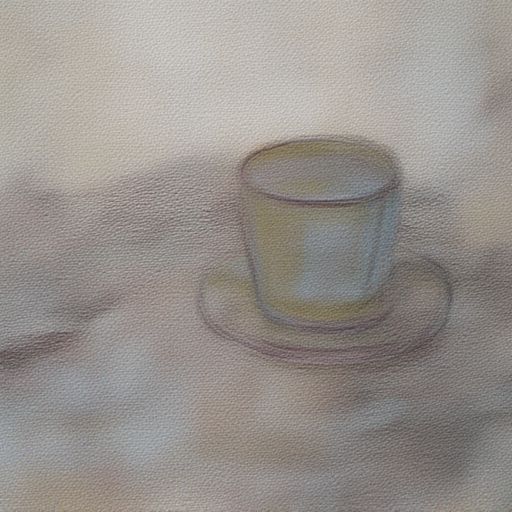}};
	\node (img10) at (9*\imgspacing, 0) {\includegraphics[width=0.04\textwidth]{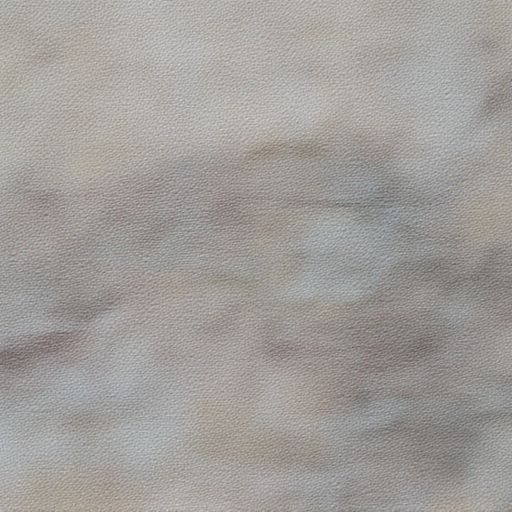}};

	\begin{scope}[shift={(-0.4, -0.5)}]
	\node at (0.35, -0.3) {$\alpha=1$};
	\bigarrow{7.6}{0.1}{0.4}{0.15}{cyan!15}{cyan!75}
	\node at (7.6, -0.3) {$\alpha=2$};
	\end{scope}
	\end{scope}
	
	\node[rotate=90] at (-0.7, -0.8) {\large LoCo};
	
	\node[rotate=90] at (-0.7, -3.3) {\large LoRA};
	
	\end{tikzpicture}	
}
	\caption{Effect of varying the temperature parameter $t$ for LoCo (top two rows) and scaling parameter $\alpha$ for LoRA (bottom two rows) on image generation quality. LoCo was trained using $\vR=(\vI - \vA t)^{-1}(\vI + \vA t)$. During inference, we vary $t$ within the range $[0, 1]$ (first row) and $[1, 2]$ (second row). We observe that for $t \in [1 - l, 1+u]$ for some $l, u$, the generated images maintain high visual quality. Zoom for better view quality. See Figure~\ref{fig:varying_t} and Figure~\ref{fig:varying_t_2_3} in Appendix for better visualization.}
	\label{fig:varying_t_main}
\end{figure}

\begin{figure}[h]
	\begin{tikzpicture}
	\node at (0, 0) {\includegraphics[width=0.45\textwidth, height=0.1\textwidth]{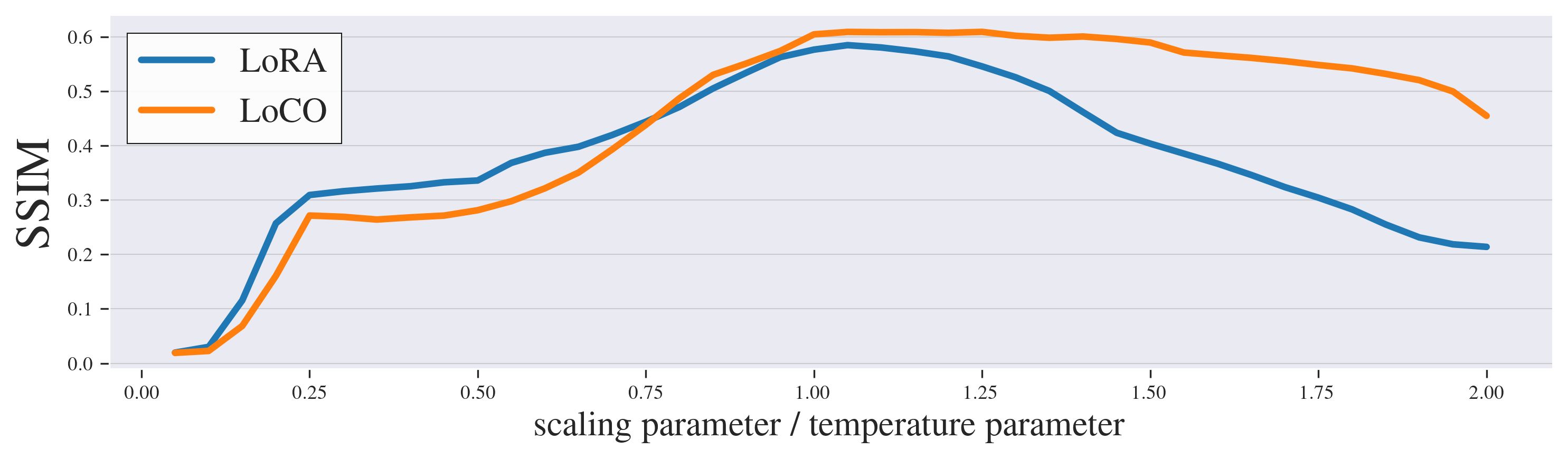}};
	
	\node at (0, -1.8) {\includegraphics[width=0.45\textwidth, height=0.1\textwidth]{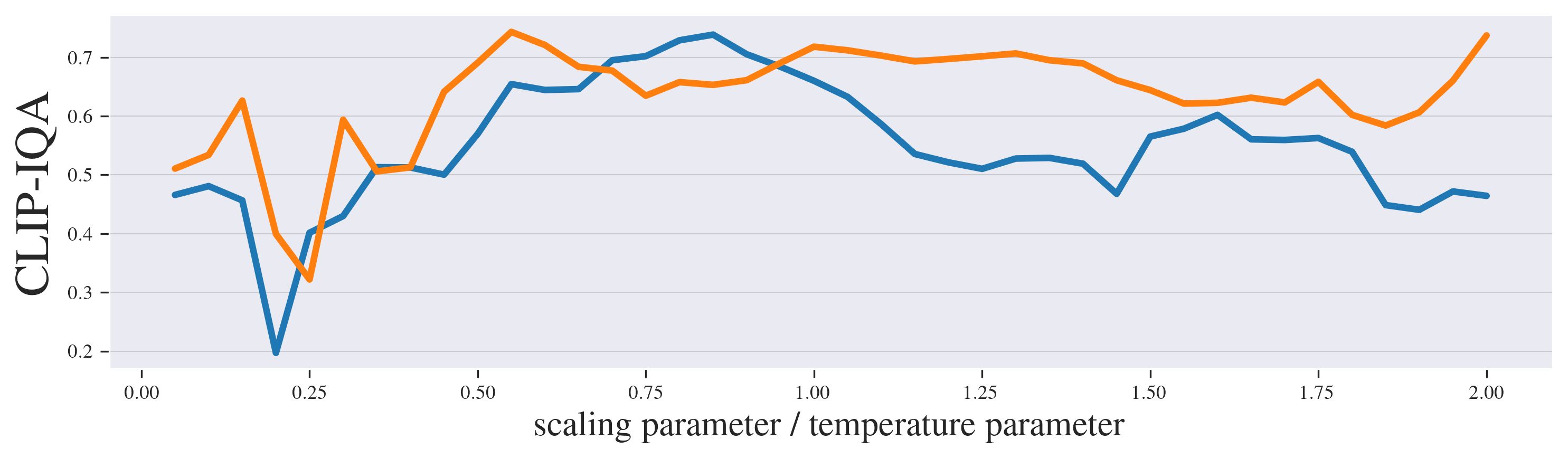}};
	
	\node at (0, -3.6) {\includegraphics[width=0.45\textwidth, height=0.1\textwidth]{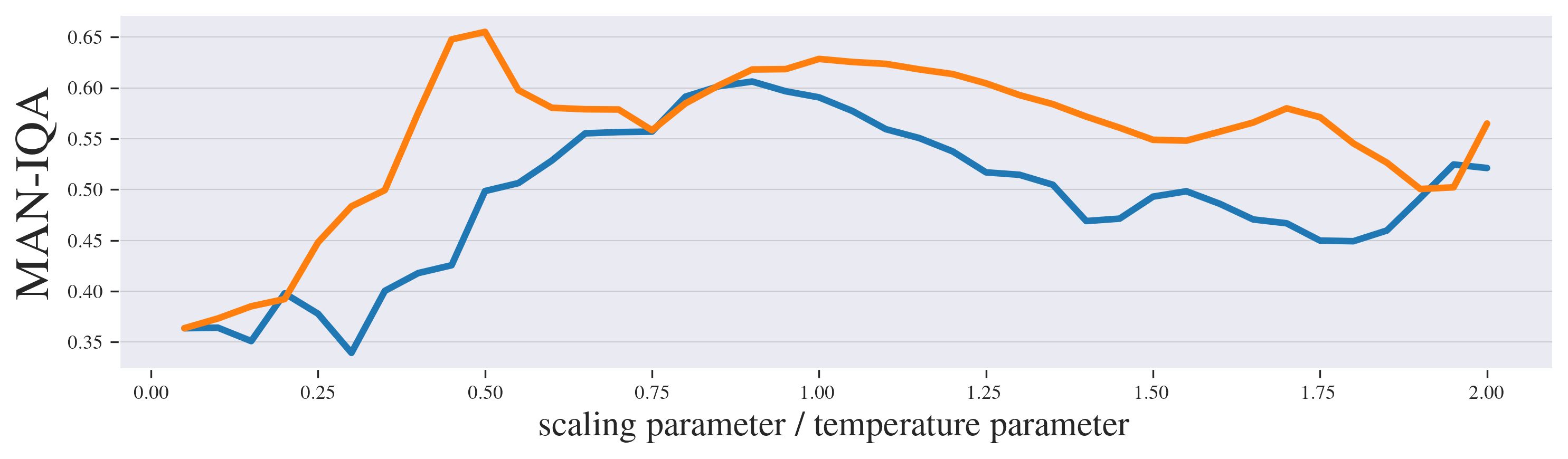}};
	
	\node at (0, -5.4) {\includegraphics[width=0.45\textwidth, height=0.1\textwidth]{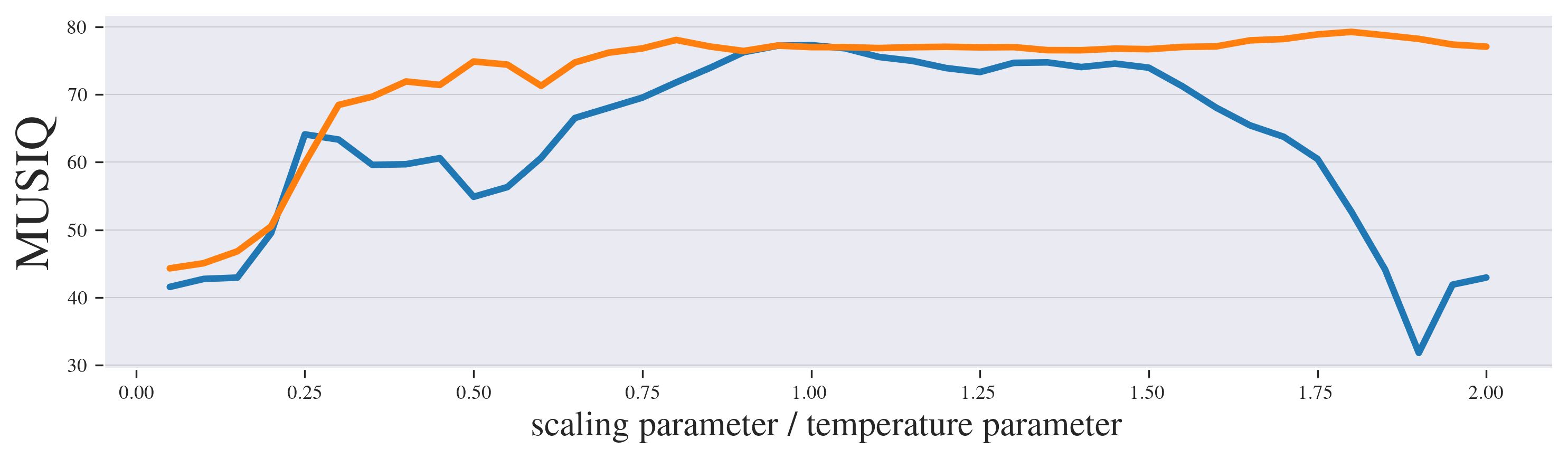}};

	\end{tikzpicture}
	\caption{Image quality indicated by SSIM, CLIP-IQA, MAN-IQA, and MUSIQ criteria of the ``coffee'' sample as the scaling parameter $\alpha$ or temperature parameter $t$ varies. We observe that the generated images by LoCO maintain high quality over a wider range than those by LoRA.}
	\label{fig:vary_t_2}
\end{figure}

\paragraph{Effect of the number of rotations $n$.} We ablate the number of rotations (using values of 2, 4, 6, 8) on the Canny edge detection setting. Table~\ref{tab:ablate_num_rotations} shows the results. We observe that the model benefits from a higher number of rotations, which increases the expressiveness of the LoCO adapter. However, since each rotation approximation can be computed in parallel, GPU memory increases linearly with the number of rotations.

\begin{table}[h]
	\centering
	\renewcommand{\arraystretch}{1.2}
	\resizebox{0.45\textwidth}{!}{
	\begin{tabular}{c |cccc}
		\hline
		 \#Rotations & FID $\downarrow$ & SSIM $\uparrow$ & CLIP-IQA $\uparrow$ & MAN-IQA $\uparrow$ \\
		\hline
		 2 & 33.66 & 0.37 & 0.38 & 0.51 \\
		 4 & 32.91 & 0.36 & 0.42 & 0.55 \\
		 6 & 27.12 & 0.42 & 0.59 & 0.59 \\
		 8 & 25.00 & 0.44 & 0.66 & 0.61 \\
		\hline
	\end{tabular}
	}
	\caption{Impact of the number of rotations $n$ on LoCO performance given a fixed rank $r$.}
	\label{tab:ablate_num_rotations}
\end{table} 
\subsection{Fine-tuning Vision Transformers (ViTs)}

\paragraph{Dataset.} We conduct experiments fine-tuning vision models on the Visual Task Adaptation Benchmark (VTAB). Specifically, we employ VTAB-1k~\cite{vtab}, which comprises 19 diverse visual classification tasks organized into three distinct groups: Natural (tasks featuring natural images), Specialized (medical and satellite imagery), and Structured (tasks requiring geometric understanding). In the fine-tuning setting, the number of training samples for each task is constrained to 1,000 examples. The test dataset for evaluation remains consistent with the original VTAB test set.

\paragraph{Pretrained model.} We use ViT-B/16~\cite{vit} as our main backbone, which has been pretrained on the large-scale ImageNet-21K dataset.

\paragraph{Baselines.} We compare our method against two distinct categories of baselines: non-orthogonal transformation methods and orthogonal adaptation methods. For non-orthogonal baselines, we include LoRA~\cite{hu2022lora}, which decomposes weight updates using low-rank matrices; Adapter~\cite{adapter}, which inserts bottleneck modules; Scale-and-Shift Feature (SSF)~\cite{ssf}, which performs lightweight feature transformations; and Visual Prompt Tuning (VPT)~\cite{vpt}, which learns task-specific prompt embeddings. For orthogonal fine-tuning baselines, we compare with BOFT~\cite{liu2024boft} and HRA~\cite{yuan2024bridging}. We perform a hyperparameter search for the learning rate of all models from the set $[\num{5e-5}, \num{1e-4}, \num{5e-4}, \num{1e-3}]$ and report the best test performance for each method.

\paragraph{Results.} 
Figure~\ref{fig:vtab} shows performance across models. LoCO achieves competitive accuracy across all three VTAB-1k benchmark categories, with the highest average among orthogonal PEFT methods on structured tasks (63.4\%) while remaining competitive on specialized tasks (86.4\%) and natural tasks (82.3\%).
On natural images, LoCO obtains strong results on Caltech101 (95.6\%), DTD (71.4\%), and Flowers102 (99.4\%). Notably, LoCO shows substantial gains on out-of-distribution tasks: Patch Camelyon reaches 88.1\% versus BOFT's 83.2\% (+4.9\%), demonstrating effective transfer to medical imaging. Performance on EuroSAT (95.6\%) and RESISC45 (85.4\%) further confirms robustness on specialized domains. Detailed per-dataset results are provided in Table~\ref{tab:vtab} (Appendix~\ref{appendix:vit}).

\begin{figure}
	\centering
	\includegraphics[width=1.1\linewidth]{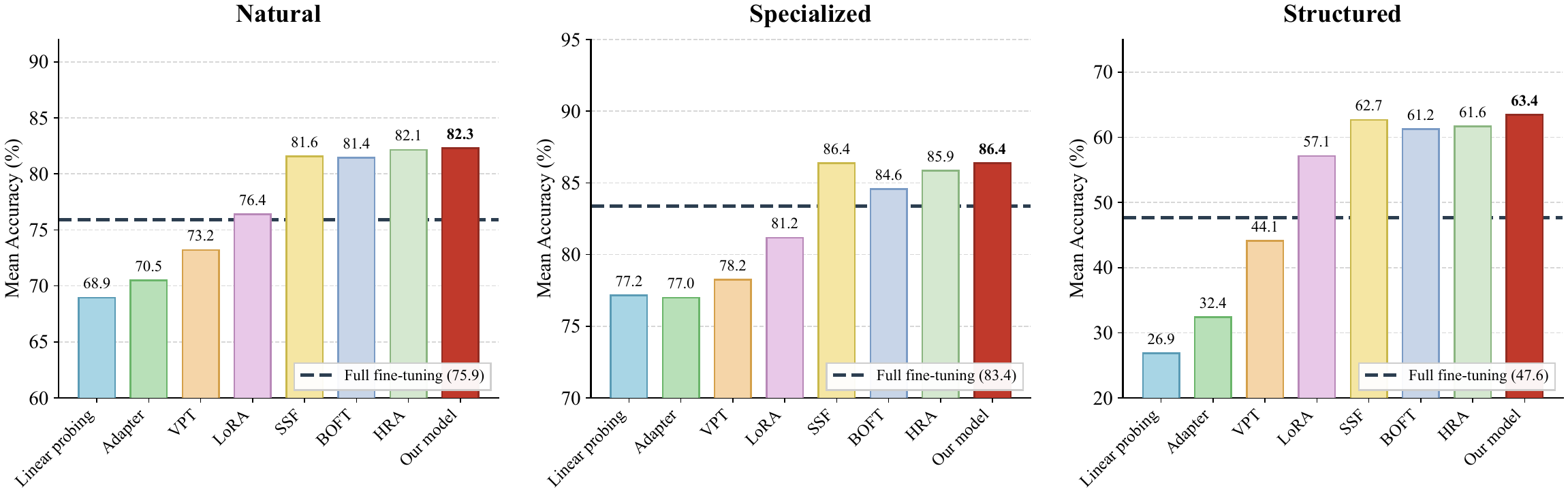}
	\caption{Performance comparison on VTAB-1k benchmark across Natural, Specialized, and Structured task groups. LoCO consistently achieves competitive or superior accuracy compared to both non-orthogonal and orthogonal fine-tuning baselines.}
	\label{fig:vtab}
\end{figure} \section{Conclusion}

We introduce a low-rank compositional orthogonal fine-tuning (LoCO), a parameter-efficient method that adapts foundation models through orthogonal transformations. By constructing orthogonal matrices via low-rank skew-symmetric parameterization and leveraging the Cayley transform with the Sherman-Morrison-Woodbury identity, LoCO reduces computational complexity while allowing fully parallel computation of compositional rotations. Experiments across natural language understanding, mathematical reasoning, diffusion transformers, and vision transformers demonstrate consistent improvements over existing orthogonal fine-tuning methods.

\paragraph{Limitations and future work.} While LoCO's low-rank formulation addresses scalability for typical fine-tuning scenarios, high-dimensional settings may benefit from further optimization. Future directions include theoretical analysis of optimal rotation decompositions, extending temperature scaling to advanced applications such as combining multiple adaptations with multiple scaling factors, and application to multimodal models.

\newpage
\newpage
\section*{Acknowledgements}
This work was supported by the Institute of Information \& Communications Technology Planning \& Evaluation (IITP) grant funded by the Korea government (MSIT) (No. RS-2019-II190079, Artificial Intelligence Graduate School Program (Korea University)), the National Research Foundation of Korea (NRF) grant funded by the Korea government (MSIT) (No. RS-2025-23525509), and the High-Performance Computing Support Project, funded by the Government of the Republic of Korea (Ministry of Science and ICT) (No. RQT-25-070263).

\bibliographystyle{named}

\newpage
\appendix

\crefalias{section}{appendix} \crefalias{subsection}{appendix}

\ifarxiv
    \setcounter{page}{1}

\section{Theoretical Analysis of LoCO}
\subsection{Time complexity among orthogonal approaches}
This section aims to provide a comprehensive comparison of orthogonal fine-tuning methods based on their structural properties, time and space complexity. In particular, we compare LoCO with several contemporary methods, namely: OFT~\cite{qiu2023controlling}, HRA~\cite{yuan2024bridging}, BOFT~\cite{liu2024boft} -- with respect to their theoretical methodology and actual implementations. While all the approaches fundamentally target to construct an orthogonal $d \times d$ transformation matrix to preserve the geometric structure of pretrained representations, they diverge significantly in the parameterization strategies and computational trade-offs. Here we concentrate on the rotation only: $\vz  = \vx \vR , \vx \in \mathbb{R}^{N \times d}$.

\paragraph{OFT:} uses a block-diagonal matrix to represent the orthogonal transformation, each with block size $b \times b$. Applying the Cayley transformation on each block requires $\mathcal{O}(b^3)$, then the total complexity to build a tranformation given input $\vx$ is: $\mathcal{O}(db^2+Ndb)$. Furthermore, the small-block operations on GPUs often suffer from underutilized hardware regarding the large-scale GEneral Matrix to Matrix Multiplication (GEMM) operations. The activation and gradient memory cost is $\mathcal{O}(N \cdot d + d \cdot b)$.

\paragraph{HRA:} parameterizes orthogonal updates as a product of $r$ Householder reflections. Theoretically, by utilizing the WY representation $R = I - VTV^\top$, the input transformation can be optimized to $\mathcal{O}(N \cdot d \cdot r + N \cdot r^2)$, though this entails an additional construction cost of $\mathcal{O}(d \cdot r^2)$ for the matrix $T$. In contrast, LoCO's additive formulation bypasses this $d$-dependent construction overhead. While LoCO incurs a minimal inversion cost of $\mathcal{O}(n \cdot r^3)$, this term is asymptotically smaller than HRA's construction overhead since $r \ll d$. The gradient and activation memory with the WY representation are $\mathcal{O}(r^2)$, and $\mathcal{O}(N \cdot r)$, respectively, resulting to final complexity of $\mathcal{O}(Nr + r^2)$.

More crucially, practical implementations of HRA (e.g., in the \textit{peft} library) often default to a weight-centric approach, sequentially materializing a dense $d \times d$ matrix via $r$ matrix-vector products, resulting in a practical complexity of $\mathcal{O}(r \cdot d^2)$. This materialization further leads to a prohibitive training memory footprint of $\mathcal{O}(d^2 + N \cdot d)$, where $\mathcal{O}(d^2)$ denotes the dense materialization of the reflection matrix within the computational graph, and $\mathcal{O}(Nd)$ is for the input $\vx$ activation.
Conversely, LoCO’s parallelized input-centric design avoids $d \times d$ instantiation entirely, strictly bounding its training memory to $\mathcal{O}(n \cdot r \cdot (N+d))$. This makes LoCO significantly more scalable for high-dimensional models and long-context training where HRA’s quadratic dependency on $d$ becomes a critical bottleneck.

\paragraph{BOFT:} Theoretically, BOFT parameterizes the orthogonal transformation using sparse butterfly factors, suggesting an optimal parameter memory complexity of $\mathcal{O}(m \cdot d \cdot b)$  and a theoretical total work (FLOPs) $\mathcal{O}(m \cdot N \cdot d \cdot b )$ due to the cost of $\mathcal{O}(db)$ for a single matrix multiplication. However, \textit{in practice}, preserving such fine-grained sparsity and sequential permutations leads to scattered memory accesses, which are inefficient on modern GPU architectures optimized for dense matrix multiplications~\cite{dao2022monarch}. Forced by this hardware limitation, the official implementation explicitly abandons sparse execution in favor of a weight-centric strategy. It sequentially accumulates these $m$ factors ($R = B_1 B_2 \dots B_m$) into a dense identity matrix to construct the full $d \times d$ rotation matrix before applying it to the input $x$. Consequently, the practical forward time complexity (Total Work) degrades to $\mathcal{O}(m \cdot d^2 \cdot b + N \cdot d^2)$. Specifically, the term $\mathcal{O}(m \cdot d^2 \cdot b)$ arises from $m$ sequential sparse-dense matrix multiplications to materialize the $d \times d$ matrix, while $\mathcal{O}(N \cdot d^2)$ accounts for the final dense projection of $N$ tokens. 
More critically, this dense materialization inflates the training memory footprint to $\mathcal{O}(m \cdot d^2 + N \cdot d)$. While $\mathcal{O}(N \cdot d)$ denotes the standard activation memory for the input features, the prohibitive $\mathcal{O}(m \cdot d^2)$ term arises from caching the intermediate dense matrices in the computational graph for backward gradients, imposing severe VRAM bottlenecks.

Furthermore, because modern Tensor Cores are strictly optimized for contiguous, large-block dense GEMMs, executing batched operations on exceptionally small block sizes ($b$) suffers from severe memory bandwidth bottlenecks and hardware padding overheads. Consequently, the theoretical computational savings of $b \ll d$ fail to translate into practical latency reductions, rendering the approach computationally intensive in real-world deployments~\cite{dao2022monarch}.

\paragraph{LoCO:} adopts input-centric transformation and circumvents prohibitive $d \times d$ dense matrices. Serial compositional adapters theoretically require a total computational work of $\mathcal{O}(n \cdot N \cdot d \cdot r + n \cdot r^3)$, where $\mathcal{O}(nNdr)$ and $\mathcal{O}(nr^3)$ denotes the cost of matrix-vector multiplication and matrix inversion, respectively (See~\Cref{eq:actual_adapter}). By virtue of full parallelism, LoCO reduces its sequential span to a depth of $\mathcal{O}(Ndr+r^3)$. Regarding activation memory, the static memory for parameter space is $\mathcal{O}(n \cdot d \cdot r)$. In addition, the framework only needs to cache $n$ independent projection tensors of size $N \times r$, without any $d \times d$ intermediate matrices, resulting in a memory footprint of $\mathcal{O}(n \cdot (d+N) \cdot r)$, where $\mathcal{O}(ndr)$ and $\mathcal{O}(nNr)$ represent gradients and activations memory, respectively.

In addition, in the naive parallel input-centric approach, we need to store $n$ intermidate tensors of size $N \times d$. The total memory for activations and gradients becomes: $\mathcal{O}(n \cdot N \cdot d) + \mathcal{O}(n \cdot d \cdot r)$. This significant increase in training overhead is \emph{eliminated} by serial compositional LoCO.

We further summarize the comparison results among 4 orthogonal fine-tuning methods in \Cref{tab:comprehensive_comparison}.

\begin{table*}[t]
\centering
\caption{Comprehensive Complexity and Structural Comparison of Orthogonal Fine-Tuning Methods, with respect to the analysis and notations above. Time complexity is decoupled into Total Work (FLOPs) and Sequential Span (Critical path length -- Latency). For sequential methods (HRA, BOFT), Span scales with the number of components, whereas LoCO achieve a constant Span relative to $n$ due its parallel additive structure. For HRA, we include WY-representation as the theoretically optimal strategy.}
\label{tab:comprehensive_comparison}
\renewcommand{\arraystretch}{1.8}
\resizebox{\textwidth}{!}{\begin{tabular}{@{} >{\raggedright\arraybackslash}m{1.2cm} >{\raggedright\arraybackslash}m{2.2cm} l l l l >{\raggedright\arraybackslash}m{5.7cm} @{}}
\toprule
\textbf{Method} & \textbf{Strategy} & \textbf{\#Params} & \textbf{Total Work (FLOPs)} & \textbf{Sequential Span} & \makecell[l]{\textbf{Memory} \\ \textbf{(Act. + Grad.)}} & \textbf{Remarks} \\ \midrule

\textbf{OFT} & Input-centric & $\mathcal{O}(db)$ & $\mathcal{O}(Ndb + db^2)$ & $\mathcal{O}(Ndb + db^2)$ & $\mathcal{O}(Nd + db)$ & Sparse block-diagonal; single-layer; local subspace interaction only. \\ \cmidrule(lr){1-7}

\textbf{BOFT} & \makecell[l]{Weight-centric \\ (Practical)} & $\mathcal{O}(mdb)$ & $\mathcal{O}(md^2b + Nd^2)$ & $\mathcal{O}(md^2b + Nd^2)$ & $\mathcal{O}(md^2 + Nd)$ & Sequential chain; $d^2$ bottleneck during dense materialization, negating theoretical FLOPs. \\ \cmidrule(lr){1-7}

\textbf{HRA} & WY-Refined & $\mathcal{O}(dr)$ & $\mathcal{O}(Ndr + dr^2 + Nr^2)$ & $\mathcal{O}(rd + Ndr + r^2)$ & $\mathcal{O}(Nr + dr)$ & High-depth Householder chain due to vector-based reflection; In their practical execution, sequential transformations impose $\mathcal{O}(d^2)$ overhead. \\ \cmidrule(lr){1-7}

\textbf{LoCO (ours)} & \makecell[l]{{Parallel} \\ {Input-centric}} & ${\mathcal{O}(ndr)}$ & $\mathcal{O}(nNdr + nr^3)$ & $\mathbf{\mathcal{O}(Ndr + r^3)}$ & ${\mathcal{O}(nr(N + d))}$ & Orthogonal approximation; parallel additive structure; shallow depth $\left( n \in \{1, 2, 4\} \right)$. \\ \bottomrule
\end{tabular}}
\end{table*}

\subsection{Theoretical analysis on approximation error}
\label{sec:app_error}
In this section, we provide detailed proofs of the upper bound of approximation error in \Cref{sec:model_method}.

\begin{theorem}[Upper Bound of Approximation Error]
    \label{thm:upper_bound_dev}
    Let $\rmZ_i = \Delta_i = 2\rmX_i(\rmI-\rmY_i^\top \rmX_i)^{-1}\rmY_i^\top$ denote the update components in \Cref{sec:model_method}. 
    
    Consider the generic expansion of a rotation chain $\mathbf{R} = \prod_{i=1}^{n} (\rmI + \rmZ_i)$, where $(\rmI + \rmZ_i) \in \textnormal{SO}(d)$. Its first-order parallel approximation is defined as $\tilde{\rmR} = \rmI + \sum_{i=1}^{n} \rmZ_i$. 
    
    If the Frobenius norm of each component is bounded by $\left\lVert \rmZ_i \right\rVert_F \leq \gamma$, then the orthogonal deviation of the parallel approximation is strictly bounded by:
    \begin{equation}
        \left\lVert \rmI - \tilde{\rmR}^\top \tilde{\rmR} \right\rVert_F \leq n(n-1) \gamma^2
    \end{equation}
\end{theorem}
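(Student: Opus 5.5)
Since each factor $\rmI + \rmZ_i$ lies in $\textnormal{SO}(d)$, I would begin from the per-component identity
\begin{equation*}
(\rmI + \rmZ_i)^\top(\rmI + \rmZ_i) = \rmI
\quad\Longrightarrow\quad
\rmZ_i + \rmZ_i^\top + \rmZ_i^\top \rmZ_i = \mathbf{0},
\end{equation*}
which holds for every $i$. The goal is to use this to remove all diagonal ($i=j$) contributions in $\tilde{\rmR}^\top \tilde{\rmR}$, so that only cross terms remain.

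Expanding $\tilde{\rmR}^\top \tilde{\rmR}$ gives
\begin{equation*}
\tilde{\rmR}^\top \tilde{\rmR} = \rmI + \sum_{i=1}^n \left(\rmZ_i + \rmZ_i^\top\right) + \sum_{i=1}^n \sum_{j=1}^n \rmZ_i^\top \rmZ_j .
\end{equation*}
I would split the double sum into its diagonal part $\sum_i \rmZ_i^\top \rmZ_i$ and its off-diagonal part $\sum_{i\neq j} \rmZ_i^\top \rmZ_j$. Combining the diagonal part with the linear terms and applying the per-component identity for each $i$ cancels all of them. The result is the exact expression
\begin{equation*}
\rmI - \tilde{\rmR}^\top \tilde{\rmR} = -\sum_{i \neq j} \rmZ_i^\top \rmZ_j .
\end{equation*}

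Next I would bound the right-hand side using the triangle inequality and the submultiplicativity of the Frobenius norm, $\|\rmZ_i^\top \rmZ_j\|_F \le \|\rmZ_i\|_F \|\rmZ_j\|_F \le \gamma^2$. There are exactly $n(n-1)$ ordered pairs with $i \neq j$, so summing gives the stated bound $n(n-1)\gamma^2$.

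I do not expect a real obstacle. The one point that needs care is recognizing that orthogonality of each individual factor is exactly what kills the first-order terms and the diagonal quadratic terms, leaving only cross terms. One should also note that the bound could be written more tightly as $\big(\sum_i \|\rmZ_i\|_F\big)^2 - \sum_i \|\rmZ_i\|_F^2$, but the crude per-pair bound suffices for the statement. The conclusion is then that the orthogonality defect of the parallel approximation is $\mathcal{O}(\varepsilon^4)$ when $\gamma = \mathcal{O}(\varepsilon^2)$, consistent with the informal analysis in \Cref{sec:model_method}.
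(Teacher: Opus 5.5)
Your proposal is correct and matches the paper's proof step for step. You use the per-factor orthogonality identity $\rmZ_i + \rmZ_i^\top = -\rmZ_i^\top \rmZ_i$ to cancel the linear terms and the diagonal quadratic terms, which gives the exact expression $\rmI - \tilde{\rmR}^\top\tilde{\rmR} = -\sum_{i\neq j}\rmZ_i^\top\rmZ_j$; you then bound the $n(n-1)$ cross terms by the triangle inequality and Frobenius submultiplicativity, and your side remark that the sharper bound $\bigl(\sum_i\|\rmZ_i\|_F\bigr)^2 - \sum_i\|\rmZ_i\|_F^2$ also holds is correct.
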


\begin{proof}
    Consider the orthogonal deviation: $\Delta \rmR_O = \rmI - \tilde{\rmR}^\top \tilde{\rmR}$.
    
Since $\rmR_i = \rmI + \rmZ_i$ is orthogonal, we have $(\rmI + \rmZ_i)^\top (\rmI + \rmZ_i) = \rmI$, which yields:
    \begin{equation}
        \label{eq:zi_identity}
        \rmZ_i^\top + \rmZ_i = -\rmZ_i^\top \rmZ_i
    \end{equation}
    
Let $\rmS = \sum_{i=1}^n \rmZ_i$. The Gram matrix of the approximation expands as:
    \begin{gather}
        \tilde{\rmR}^\top \tilde{\rmR} = (\rmI +\rmS^\top)(\rmI +\rmS) = \rmI + \rmS^\top + \rmS + \rmS^\top \rmS \notag \\[1ex]
        \Rightarrow \Delta \rmR_O = -[\rmS^\top \rmS + (\rmS^\top + \rmS)]
    \end{gather}
    
By substituting \Cref{eq:zi_identity} into the linear term $(\rmS^\top + \rmS)$, we obtain:
    \begin{align}
        \Delta \rmR_O &= - \left( \sum_{i=1}^n \sum_{j=1}^n \rmZ_i^\top \rmZ_j - \sum_{i=1}^n\rmZ_i^\top \rmZ_i \right) \notag\\[1ex]
         &= - \sum_{i \neq j} \rmZ_i^\top \rmZ_j
    \end{align}
    
Applying the triangle inequality and the submultiplicativity of the Frobenius norm ($\lVert \rmA\rmB \rVert_F \leq \lVert \rmA \rVert_F \lVert \rmB \rVert_F$), we have:
    \begin{align}
        \left\lVert \Delta \rmR_O \right\rVert_F &= \left\lVert \sum_{i \neq j} \rmZ_i^\top \rmZ_j \right\rVert_F \leq \sum_{i \neq j} \left\lVert  \rmZ_i^\top \rmZ_j \right\rVert_F\notag \\[1ex]
        \Rightarrow \left\lVert \Delta \rmR_O \right\rVert_F &\leq \sum_{i \neq j} \left\lVert \rmZ_i^\top \right\rVert_F \left\lVert \rmZ_j \right\rVert_F 
    \end{align}
    
Since $\left\lVert \rmZ_i^\top \right\rVert_F = \left\lVert \rmZ_i \right\rVert_F \leq \gamma$, and there are exactly $n(n-1)$ terms where $i \neq j$, we conclude:
    \begin{equation}
        \left\lVert \rmI - \tilde{\rmR}^\top \tilde{\rmR} \right\rVert_F \leq n(n-1) \gamma^2
    \end{equation}
    This completes the proof.
\end{proof}
\Cref{thm:upper_bound_dev} shows the upper bound of deviation error in orthogonal approximation, providing the formal justification for LoCO's parallelized architecture. It mainly includes these properties below to advocate the effectiveness of the parallel mechanism:
\begin{itemize}
	\item \textbf{Mathematical Validation of Parallelism:} The theorem justifies replacing sequential rotation products with a parallel additive structure by proving the error remains bounded.
	\item \textbf{Guaranteed Orthogonality Approximation:} The upper bound $n^2\gamma^2$ ensures ``quasi-orthogonality'': for small update magnitude $\gamma$, the orthogonal properties are preserved.
	\item \textbf{Optimized Design Space:} The quadratic relationship with $n$ suggests that for small $n$, the deviation is minimal, providing a theoretical "safe zone" for the shallow configurations -- $n \in \{1,2,4\}$ -- of the LoCO architecture.
\end{itemize}

In addition, we provide empirical measurements of orthogonality deviation, and Frobenius norm of $\rmZ_i$ from the fine-tuned checkpoints in \Cref{fig:orthogonal_deviation}. The orthogonality deviation (~\Cref{fig:orthogonal_deviation}a) remains relatively small across all layers, and the perturbation norm $\|\Delta_i\|_F$ (~\Cref{fig:orthogonal_deviation}b) confirms that learned parameters stay within the regime where the approximation is valid. 

\begin{figure}[h]
	\centering
	\begin{tikzpicture}
	\node at (0,0) {\includegraphics[width=0.18\textwidth]{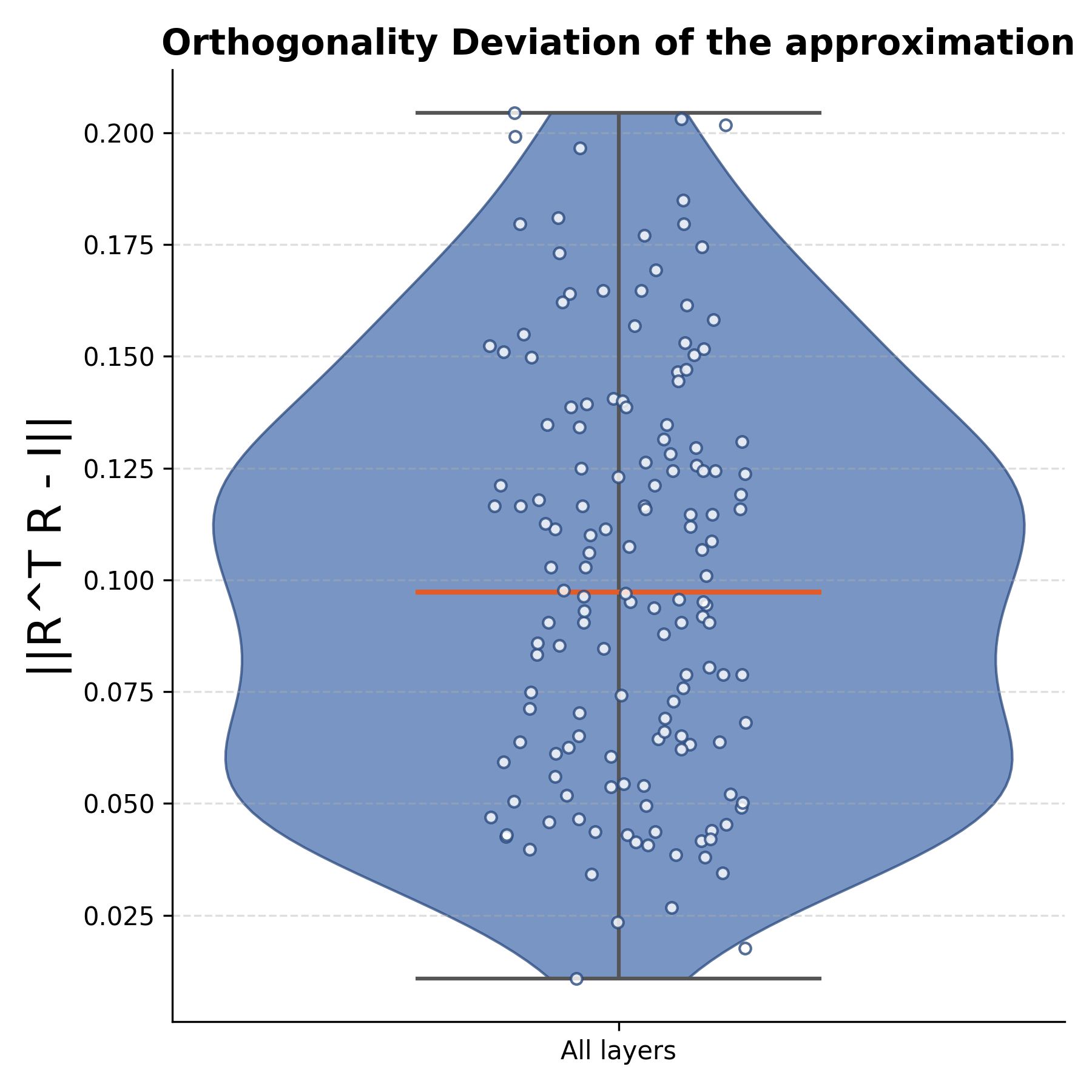}};
	\node at (4,0) {\includegraphics[width=0.18\textwidth]{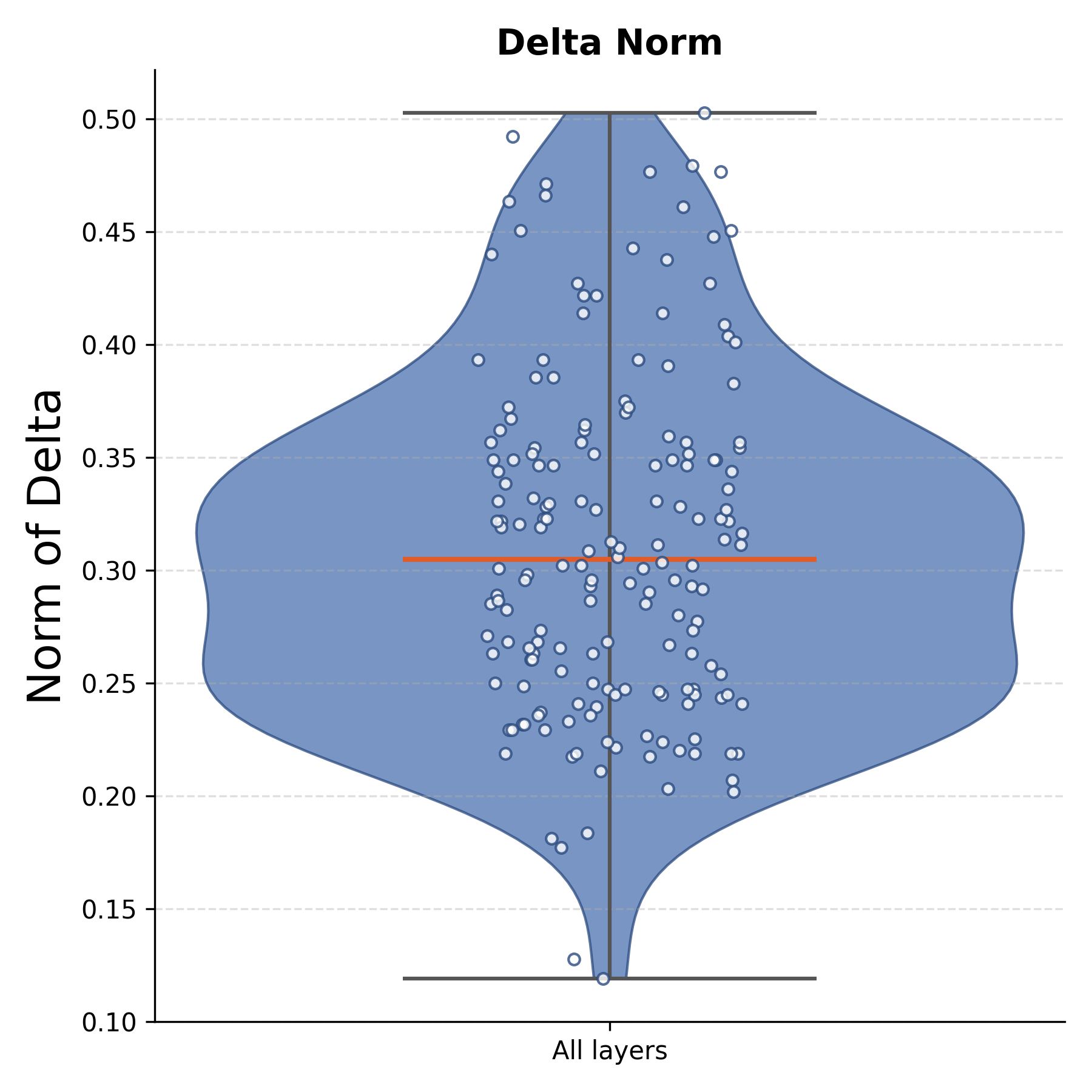}};
	\node at (0, -1.7) {(a)};
	\node at (4, -1.7) {(b)};
	\end{tikzpicture}
	\caption{(a) Orthogonality deviation $\|\tilde{\mathbf{R}}^\top \tilde{\mathbf{R}} - \mathbf{I}\|_F$ extracted from a trained diffusion model. (b) Distribution of perturbation norms $\left\lVert \rmZ_i \right\rVert_F$ or $\left\lVert \Delta_i \right\rVert_F$ from the same checkpoint. Here, $\tilde{\mathbf{R}} \in \mathbb{R}^{3072 \times 3072}$ is a high-dimensional matrix. The deviation norm remains low (mean $\approx 0.1$), indicating that the approximation roughly preserves vector norms under transformation.}
	\label{fig:orthogonal_deviation}
\end{figure}

\section{Experimental Details}

\subsection{Fine-tuning Large language models}
In this section regarding LLMs, we further describe experimental details and illustrate additional outputs on language tasks at~\ref{sec_llm}. They include experiments on natural language understanding/generation, computation efficiency evaluation, and ablation study on the impact of LoCO configurations.

\subsubsection{Natural Language Understanding}
\label{appendix:nlu_exps}

In regard to our experiments on the GLUE benchmark, we only tune the learning rate, and the number of training epochs. We adapt pre-trained DeBERTA-V3~\cite{he2021deberta} as the base model, then apply orthogonal methods to every linear module in each transformer block while freezing the pre-trained weights.

Particularly, we choose 8 datasets from the benchmark, including CoLA, SST-2, MRPC, m-MNLI, QNLI, RTE, STS-B, QQP.

For each specific task, the pooling layers and classification head are trained from scratch with individual classification learning rate. The settings for each dataset and other common factors are provided in Table~\ref{tab:app_llms:loco_nlu_common} and Table~\ref{tab:app_llms:loco_nlu_datasets}, respectively.

\begin{table}[H]
    \centering
    \resizebox{0.70\columnwidth}{!}{\begin{tabular}{l c}
        \toprule
        \textbf{Hyperparameter} & \textbf{Value} \\
        \midrule
        Optimizer & AdamW \\
        Optimizer ($\beta_1, \beta_2$) & (0.9, 0.99) \\
        Learning rate scheduler & Cosine \\
        Classifier learning rate & 2e-3 \\
        LoCO dropout & 0.0 \\
        \bottomrule
        \end{tabular}}
    \caption{Common hyperparameters shared across all GLUE tasks.}
    \label{tab:app_llms:loco_nlu_common}
\end{table} 
In order to do replication of HRA (taking regularization coefficient $\lambda{=}0$) and OFT/BOFT, we follow the settings in their papers~\cite{qiu2023controlling,liu2024boft}, and use a single learning rate value for all trainable parameters. The corresponding configurations are also mentioned as a part of Table~\ref{tab:glue_deberta}.

\subsubsection {Evaluation on mathematical reasoning tasks}
\label{appendix:nlg_math}

In the fine-tuning experiments on the LLaMA2 model, we fix the max sequence length as 512, which is sufficient for these tasks.
Following BOFT~\cite{liu2024boft}, we adapt the LLaMA2-7B model~\cite{touvron2023llama2openfoundation} on the first 512 tokens of MetaMath-40K~\cite{yu2023metamath}. The details are given in Table~\ref{tab:app_llms:metamath}

For the ablation study of different LoCO configurations in~\ref{exp:llm_abation} on LLMs, we only change combinations of $n,r$ and use the same values for other training/inference factors to ensure a fair comparison.
\begin{table}[H]
    \centering
    \setlength{\tabcolsep}{6pt}
    \resizebox{0.65\columnwidth}{!}{\begin{tabular}{l c}
        \toprule
        \textbf{Hyperparameter} & \textbf{Value} \\
        \midrule
        \multicolumn{2}{c}{\textbf{\textit{Training settings}}} \\
        \midrule
        Learning rate & 1e-3 \\
        Optimizer & AdamW \\
        Optimizer ($\beta_1, \beta_2$) & (0.9, 0.99) \\
        Learning rate scheduler & Cosine \\
        LoCO dropout & 0.1 \\
        Batch size & 32 \\
        Epoch & 3 \\
        Applied modules & \textit{Q, V} \\
        \midrule
        \multicolumn{2}{c}{\textbf{\textit{Inference settings}}} \\
        \midrule
        LLM max sequence length & default \\
        GSM new tokens & 1024 \\
        MATH new tokens & 1408 \\
        \bottomrule
        \end{tabular}}
    \caption{Hyperparameters for mathematical reasoning tasks.}
    \label{tab:app_llms:metamath}
\end{table} 
\subsubsection{Ablation study of LoCO configurations on LLMs}
\label{exp:llm_abation}
In this section, we explicitly evaluate influence of LoCo configurations on the final results, concentrating the two aspects: how rank $r$ of skew matrices affect LoCO performace, and what is the trade-off of parallel transformations $n$ in~\ref{model:Compositional Chain-of-Rotation}

\begin{table}[H]
    \centering
    \setlength{\tabcolsep}{6pt}
    \resizebox{0.80\columnwidth}{!}{
    \begin{tabular}{c c c c}
        \toprule
        \textbf{Params (\%)} & \textbf{Config ($n, r$)} & \textbf{GSM8K} & \textbf{MATH} \\
        \midrule
0.016 & $n{=}1, r{=}2$ & 41.75 & 5.80 \\
        0.031 & $n{=}1, r{=}4$ & 45.44 & 6.45 \\
        0.062 & $n{=}1, r{=}8$ & 48.40 & 7.37 \\
        \midrule
\multirow{3}{*}{0.124} 
              & $n{=}1, r{=}16$ & 50.19 & \textbf{8.40} \\
              & $n{=}2, r{=}8$  & \textbf{50.26} & 8.12 \\
              & $n{=}4, r{=}4$  & 49.46 & 7.85 \\
        \bottomrule
    \end{tabular}
    }
    \caption{Ablation study on LoCO configurations.  The first block shows the scaling effect by increasing rank $r$. The second block compares different factorization strategies ($n, r$) under a fixed parameter budget.}
    \label{tab:ablation_llm_nr}
\end{table} 
For the first question, we follow the experiment setting in~\ref{exp:llm_math}, vary the rank $r$. For the second, given a parameter budget equivalent to this experiment, $\approx$ 0.124\%, we apply and compare different configurations of $(n,r)$. The averaged results are reported in Table~\ref{tab:ablation_llm_nr}.

Our results indicate that increasing the rank $r$ consistently yields better performance. Furthermore, under a fixed parameter budget, approximating the transformation via parallel computation for low-rank skew matrices maintains competitive efficacy, demonstrating that computation efficiency benefits do not come at the cost of considerable degradation in performance.

\subsubsection{Computational Efficiency}
\label{appendix:computation_efficiency}

\begin{figure}[htbp]
	\centering
	\includegraphics[width=0.95\linewidth]{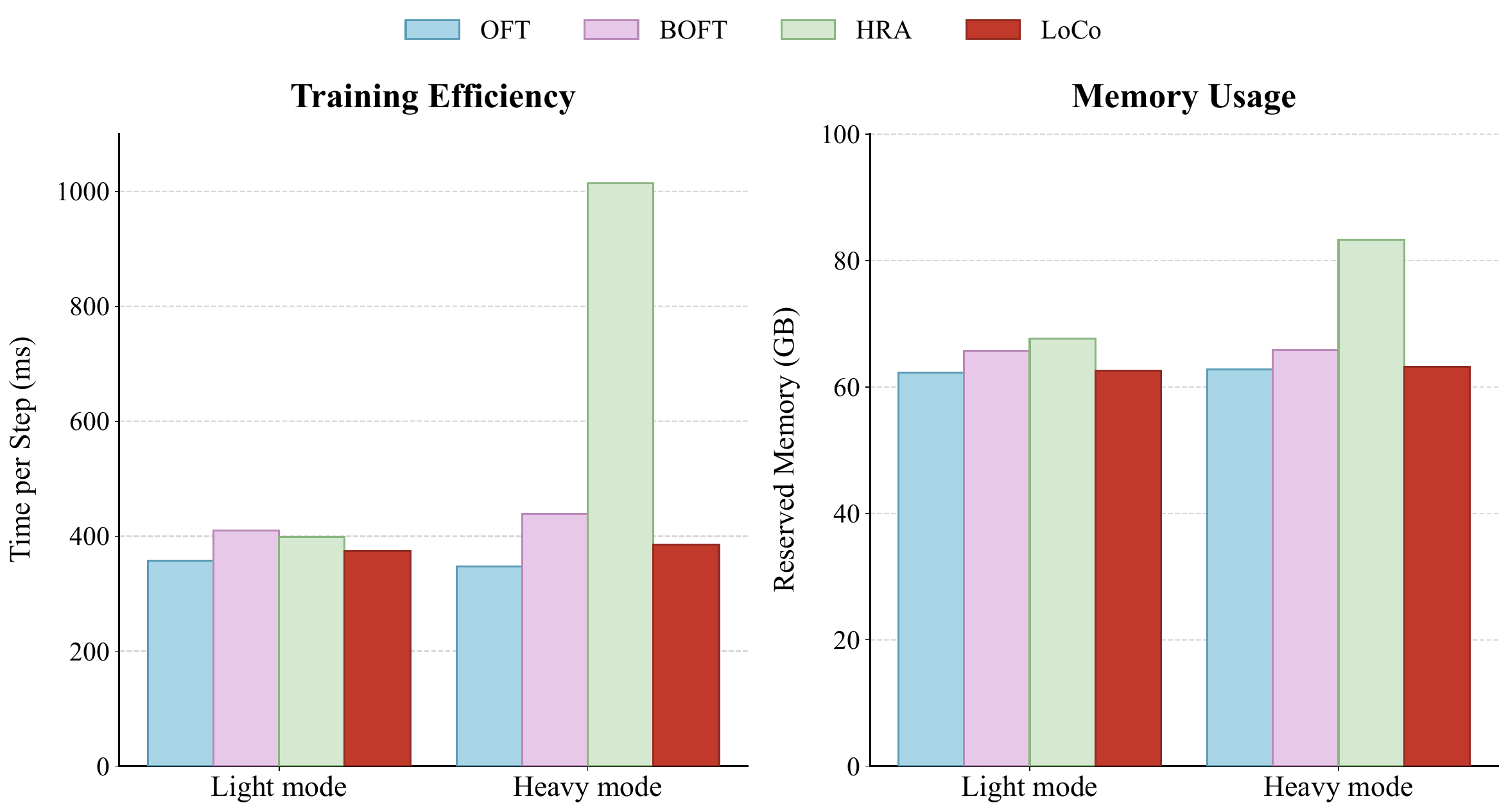}
	\caption{\label{fig:deberta64}Training efficiency comparison on DeBERTA-V3, batch size of 64.}
	
	\vspace{0.4cm}
	\centering
	\includegraphics[width=0.95\linewidth]{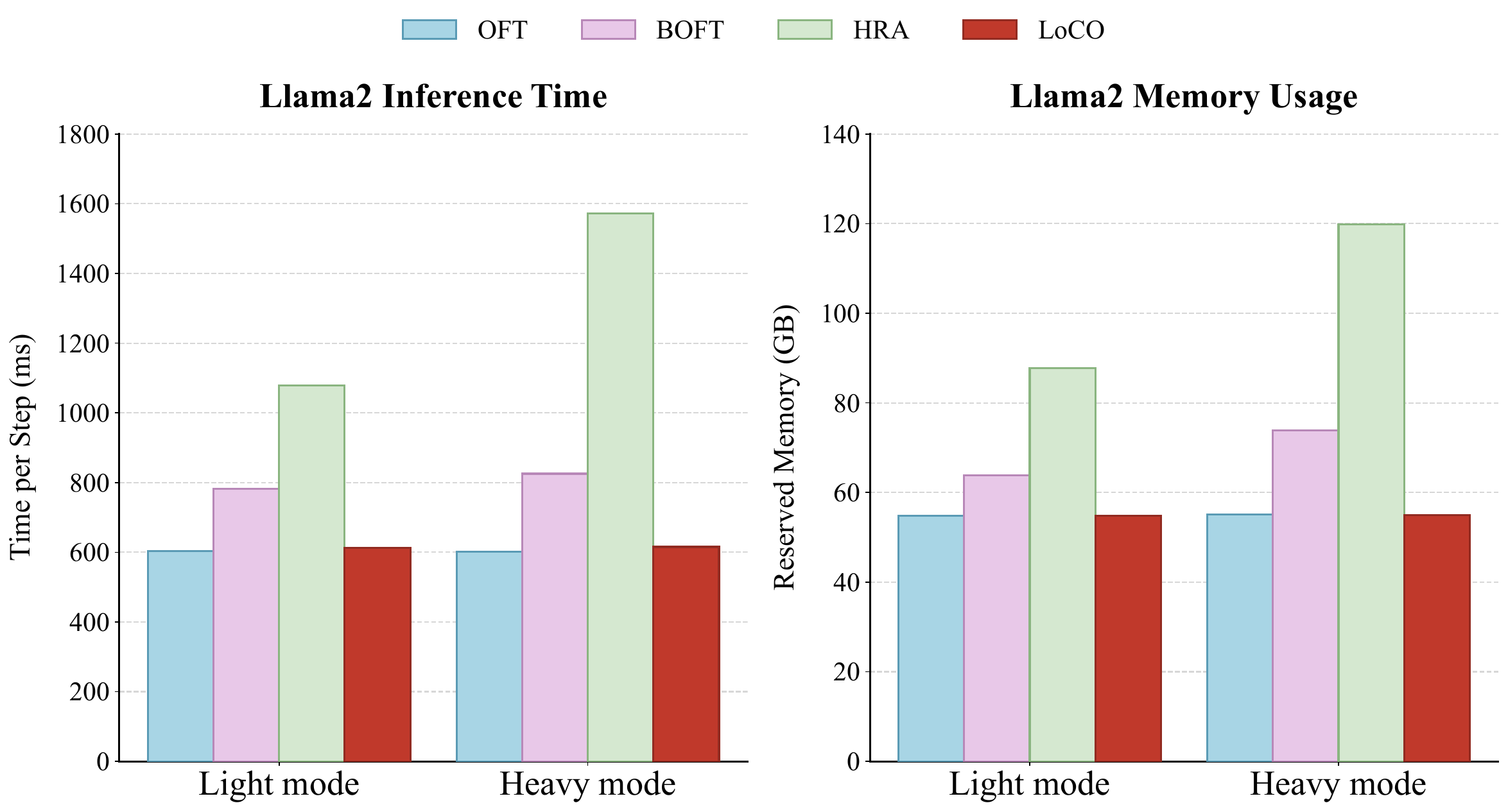}
	\caption{\label{fig:llama2b16}Training efficiency comparison on LLaMA2-7B batch size of 16.}

	\vspace{0.4cm}
	\centering
	\includegraphics[width=0.95\linewidth]{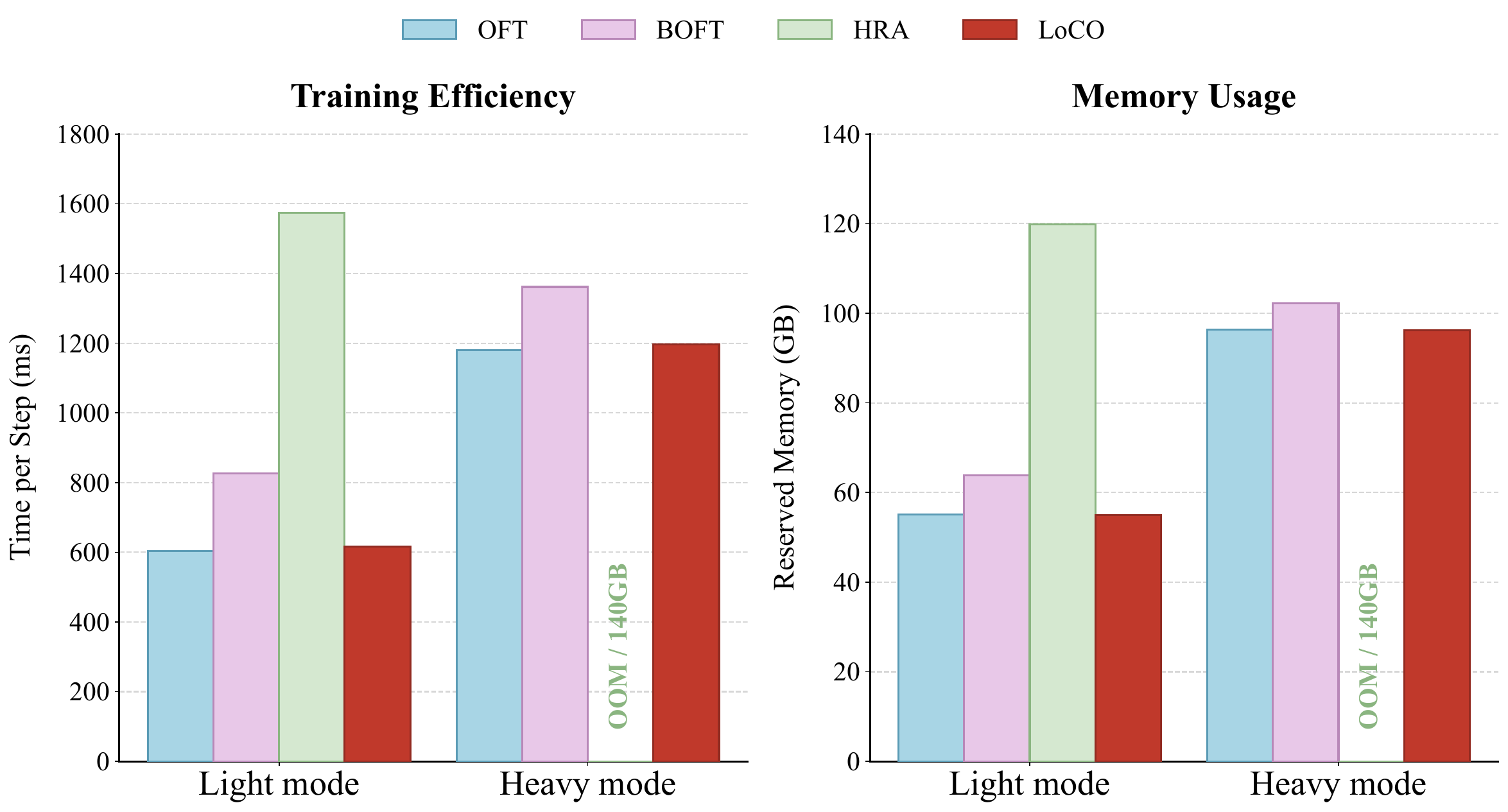}
	\caption{\label{fig:llama2b32} Efficiency comparison during training time on LLaMA2-7B batch size of 32. HRA suffers from Out of memory (OOM) issue.}
\end{figure}

To compare computational efficiency, we align the configurations of all methods to match the number of trainable parameters. We define two settings: \textit{light mode} and \textit{heavy mode}, as computational costs vary significantly depending on specific hyperparameter sets. For instance, the rank $r$ impacts the matrix inversion cost, while the number of Householder reflections determines the recursion depth in HRA.

In each mode, we test with two batch size values to get a comprehensive conclusion. For Natural language understanding tasks we use the QNLI dataset~\cite{rajpurkar2016squad} on the DeBERTA-V3~\cite{he2021deberta}, batch sizes of 16 and 64, with \textit{torch.compile()}; for Natural language generation tasks, we use the MetaMath-40K~\cite{yu2023metamath} dataset on the LLaMA2~\cite{touvron2023llama2openfoundation}, batch sizes of 16 and 32, without \textit{torch.compile()}. These models and datasets are similar to the setting of the primary experiments.

For each model, the detailed descriptions for \textit{light mode} and \textit{heavy mode} are provided in Table~\ref{tab:app_llms:eff_deberta} and Table~\ref{tab:app_llms:eff_llama}. All experiments are conducted on a single NVIDIA H200 GPU with 140GB VRAM.

\begin{table}[h]
    \centering
    \setlength{\tabcolsep}{6pt}
    \resizebox{0.85\columnwidth}{!}{\begin{tabular}{l l l c}
        \toprule
        \textbf{Setting} & \textbf{Method} & \textbf{Config} & \textbf{\#Params (M)} \\
        \midrule
        \multirow{4}{*}{Light mode}
          & OFT  & $b{=}16$      & 0.622 \\
          & BOFT & $m{=}2, b{=}4$  & 0.746 \\
          & HRA  & $r{=}8$       & 0.663 \\
          & LoCo & $n{=}1, r{=}4$  & 0.663 \\
        \midrule
        \multirow{4}{*}{Heavy mode}
          & OFT  & $b{=}64$      & 2.612 \\
          & BOFT & $m{=}2, b{=}1$  & 2.737 \\
          & HRA  & $r{=}32$      & 2.654 \\
          & LoCo & $n{=}1, r{=}16$ & 2.654 \\
        \bottomrule
        \end{tabular}}
    \caption{Detailed configurations for efficiency comparison on DeBERTa-v3.}
    \label{tab:app_llms:eff_deberta}
\end{table}

\begin{table}[h]
    \centering
    \setlength{\tabcolsep}{6pt}
    \resizebox{0.85\columnwidth}{!}{\begin{tabular}{l l l c}
        \toprule
        \textbf{Setting} & \textbf{Method} & \textbf{Config} & \textbf{\#Params (M)} \\
        \midrule
        \multirow{4}{*}{Light mode} 
          & OFT  & $b{=}32$      & 4.063 \\
          & BOFT & $m{=}2, b{=}8$  & 4.456 \\
          & HRA  & $r{=}16$      & 4.194 \\
          & LoCo & $n{=}1, r{=}8$  & 4.194 \\
        \midrule
        \multirow{4}{*}{Heavy mode} 
          & OFT  & $b{=}64$      & 8.257 \\
          & BOFT & $m{=}2, b{=}1$  & 8.651 \\
          & HRA  & $r{=}32$      & 8.388 \\
          & LoCo & $n{=}1, r{=}16$ & 8.388 \\
        \bottomrule
        \end{tabular}}
    \caption{Detailed configurations for efficiency comparison on LLaMA2-7B.}
    \label{tab:app_llms:eff_llama}
\end{table} 
\begin{table}[ht]
    \centering
    \resizebox{0.90\columnwidth}{!}{\begin{tabular}{l c c c c}
    \toprule
    \textbf{Method} & \textbf{Config} & \textbf{\#Params} & \textbf{Time (ms)} & \textbf{Mem (GB)} \\
    \midrule
    OFT & $b{=}128$ & 16.65 M & 151.3 & 15.97 \\
    \midrule
    
    \multirow{3}{*}{BOFT} 
        & $m{=}2,b{=}32$ & \multirow{3}{*}{17.04 M} & 303.7 & 57.93 \\
        & $m{=}4,b{=}16$ &                         & 466.0 & 37.68 \\
        & $m{=}8,b{=}8$  &                         & 879.8 & 98.37 \\
    \midrule
    
    HRA & $r{=}64$ & 16.78 M & \textbf{OOM} & \textbf{OOM} \\
    \midrule
    
    \multirow{3}{*}{\textbf{LoCO}} 
        & $n{=}1,r{=}32$ & \multirow{3}{*}{16.78M} & 173.3 & 15.65 \\
        & $n{=}4,r{=}8$  &                         & 171.0 & 15.71 \\
        & $n{=}2,r{=}16$ &                         & 175.3 & 15.74 \\
    \bottomrule
    \end{tabular}
    }
\caption{Efficiency comparison on an extreme configuration that is equivalent to $r=64$ vectors in $\mathbb{R}^d$. \textbf{OOM}: Out Of Memory.}
    \label{tab:efficiency_compact}
\end{table} 
Moreover, we also illustrate additional results beside Figure~\ref{fig:deberta}, in the setting of DeBERTA-V3 batch 64, LLaMA2-7B batch 16 and 32, respectively in Figure~\ref{fig:deberta64},~\ref{fig:llama2b16},~\ref{fig:llama2b32}.

Similar to Figure~\ref{fig:deberta} in~\ref{exp:llm:nlu}, we can observe consistency in requirements of adapters in regard to computational resources. LoCO and OFT are more reliable and robust compared to BOFT and HRA when the models are fine-tuned with more trainable parameters, correspondingly imposed by a large value of adapter configuration.

\begin{table*}[htbp]
    \centering
    \setlength{\tabcolsep}{6pt}
    \resizebox{0.75\textwidth}{!}{\begin{tabular}{l c c c c c c c c}
        \toprule
        \textbf{Hyperparameter} & \textbf{MNLI} & \textbf{SST-2} & \textbf{CoLA} & \textbf{QQP} & \textbf{QNLI} & \textbf{RTE} & \textbf{MRPC} & \textbf{STS-B} \\
        \midrule
        Learning Rate & 1e-4 & 1e-4 & 8e-4 & 2e-4 & 1e-4 & 5e-4 & 5e-4 & 5e-4 \\
        Batch Size & 64 & 32 & 32 & 64 & 64 & 32 & 32 & 32 \\
        Epochs & 10 & 8 & 12 & 10 & 11 & 30 & 30 & 13 \\
        Max Seq Length & 256 & 128 & 64 & 320 & 512 & 320 & 320 & 128 \\
        Eval Interval & 400 & 200 & 100 & 400 & 300 & 100 & 100 & 100 \\
        \bottomrule
        \end{tabular}}
    \caption{LoCO training hyperparameters for each dataset on GLUE benchmark.}
    \label{tab:app_llms:loco_nlu_datasets}
\end{table*}
 
This pattern is more obvious when we consider a small model, small batch sizes, then most of GPU memory is allocated to activation memory of the adapters. We also have another experiments for a more extreme configuration, equivalent to LoCO $n=1, r=32$ on LLaMA2-7B, batch size of $1$, demonstrated in ~Table~\ref{tab:efficiency_compact}.

\subsection{Fine-tuning Diffusion Transformers}
\label{appendix:diffusion}
This section provides additional details regarding our experiments on fine-tuning diffusion transformers for controllable image generation, as discussed in Section~\ref{sec:diffusion}.

\paragraph{Optimizer.} Similar to OminiControl setup, we adopt the Prodigy optimizer, an adaptive learning rate method that automatically adjusts step sizes based on gradient statistics. The optimizer is configured with:
\begin{itemize}
    \item Base learning rate: $\eta = 1$
    \item Weight decay: $\lambda = 0.01$
\end{itemize}

\paragraph*{Visualization examples} Figure~\ref{fig:canny_comparison},~\ref{fig:deblurring_comparison},~\ref{fig:depth_comparison} and~\ref{fig:fill_comparison} provide qualitative comparisons of generated images across different controllable generation tasks, including Canny edge-to-image, deblurring, depth-to-image, and impanting. Our method consistently produces high-quality images that effectively adhere to the provided control signals, demonstrating the efficacy of our method in preserving semantic alignment.

\subsection{Fine-tuning Vision Transformers (ViTs)}
\label{appendix:vit}

We provide the detailed information of VTAB-1k benchmark datasets and results in Table~\ref{tab:vtab_statistics} and Table~\ref{tab:vtab}.
\begin{table*}[h]
	\centering

	\small
	\setlength{\tabcolsep}{2pt}
	\scalebox{1}{
		


	}
	\caption{Performance comparison across different methods and datasets}
	\label{tab:vtab}
\end{table*}

\begin{table*}[t]
	\centering
	\caption{Statistics of VTAB-1k benchmark datasets.}
	\label{tab:vtab_statistics}
	%
\end{table*}

\subsection{Additional Figures}
\label{sec:appendix_additional_images}

\begin{figure*}[h]
	\centering
\newcommand{\imgspacing}{1.6}
	{
	%
	
}
	\caption{Effect of varying the temperature parameter $t$ for LoCo.}
	\label{fig:varying_t}
\end{figure*}

\begin{figure*}[h]
	\centering
\newcommand{\imgspacing}{1.6}
	%
	\caption{Effect of varying the temperature parameter $t$ on image generation quality within the range $[2, 3]$. The generated images start loosing visual quality. }
	\label{fig:varying_t_2_3}
\end{figure*}

\begin{figure*}[t]
	\centering
	%
	\caption{Qualitative comparison of Canny edge-to-image generation. Columns show: input canny edges, ground truth, and outputs from LoRA, BOFT, and our method (Ours).}
	\label{fig:canny_comparison}
\end{figure*}

\begin{figure*}[t]
	\centering
	%
	\caption{Qualitative comparison of deblurring image generation. Columns show: blurred input, ground truth, and outputs from LoRA, BOFT, and our method (Ours).}
	\label{fig:deblurring_comparison}
\end{figure*}

\begin{figure*}[t]
	\centering
	%
	\caption{Qualitative comparison of depth-to-image generation. Columns show: depth map input, ground truth, and outputs from LoRA, BOFT, and our method (Ours).}
	\label{fig:depth_comparison}
\end{figure*}

\begin{figure*}[t]
	\centering
	%
	\caption{Qualitative comparison of inpainting (fill) image generation. Columns show: masked input, ground truth, and outputs from LoRA, BOFT, and our method (Ours).}
	\label{fig:fill_comparison}
\end{figure*}

 \else
    \clearpage{}\setcounter{page}{1}

\section{Theoretical Analysis of LoCO}
\subsection{Time complexity among orthogonal approaches}
This section aims to provide a comprehensive comparison of orthogonal fine-tuning methods based on their structural properties, time and space complexity. In particular, we compare LoCO with several contemporary methods, namely: OFT~\cite{qiu2023controlling}, HRA~\cite{yuan2024bridging}, BOFT~\cite{liu2024boft} -- with respect to their theoretical methodology and actual implementations. While all the approaches fundamentally target to construct an orthogonal $d \times d$ transformation matrix to preserve the geometric structure of pretrained representations, they diverge significantly in the parameterization strategies and computational trade-offs. Here we concentrate on the rotation only: $\vz  = \vx \vR , \vx \in \mathbb{R}^{N \times d}$.

\paragraph{OFT:} uses a block-diagonal matrix to represent the orthogonal transformation, each with block size $b \times b$. Applying the Cayley transformation on each block requires $\mathcal{O}(b^3)$, then the total complexity to build a tranformation given input $\vx$ is: $\mathcal{O}(db^2+Ndb)$. Furthermore, the small-block operations on GPUs often suffer from underutilized hardware regarding the large-scale GEneral Matrix to Matrix Multiplication (GEMM) operations. The activation and gradient memory cost is $\mathcal{O}(N \cdot d + d \cdot b)$.

\paragraph{HRA:} parameterizes orthogonal updates as a product of $r$ Householder reflections. Theoretically, by utilizing the WY representation $R = I - VTV^\top$, the input transformation can be optimized to $\mathcal{O}(N \cdot d \cdot r + N \cdot r^2)$, though this entails an additional construction cost of $\mathcal{O}(d \cdot r^2)$ for the matrix $T$. In contrast, LoCO's additive formulation bypasses this $d$-dependent construction overhead. While LoCO incurs a minimal inversion cost of $\mathcal{O}(n \cdot r^3)$, this term is asymptotically smaller than HRA's construction overhead since $r \ll d$. The gradient and activation memory with the WY representation are $\mathcal{O}(r^2)$, and $\mathcal{O}(N \cdot r)$, respectively, resulting to final complexity of $\mathcal{O}(Nr + r^2)$.

More crucially, practical implementations of HRA (e.g., in the \textit{peft} library) often default to a weight-centric approach, sequentially materializing a dense $d \times d$ matrix via $r$ matrix-vector products, resulting in a practical complexity of $\mathcal{O}(r \cdot d^2)$. This materialization further leads to a prohibitive training memory footprint of $\mathcal{O}(d^2 + N \cdot d)$, where $\mathcal{O}(d^2)$ denotes the dense materialization of the reflection matrix within the computational graph, and $\mathcal{O}(Nd)$ is for the input $\vx$ activation.
Conversely, LoCO’s parallelized input-centric design avoids $d \times d$ instantiation entirely, strictly bounding its training memory to $\mathcal{O}(n \cdot r \cdot (N+d))$. This makes LoCO significantly more scalable for high-dimensional models and long-context training where HRA’s quadratic dependency on $d$ becomes a critical bottleneck.

\paragraph{BOFT:} Theoretically, BOFT parameterizes the orthogonal transformation using sparse butterfly factors, suggesting an optimal parameter memory complexity of $\mathcal{O}(m \cdot d \cdot b)$  and a theoretical total work (FLOPs) $\mathcal{O}(m \cdot N \cdot d \cdot b )$ due to the cost of $\mathcal{O}(db)$ for a single matrix multiplication. However, \textit{in practice}, preserving such fine-grained sparsity and sequential permutations leads to scattered memory accesses, which are inefficient on modern GPU architectures optimized for dense matrix multiplications~\cite{dao2022monarch}. Forced by this hardware limitation, the official implementation explicitly abandons sparse execution in favor of a weight-centric strategy. It sequentially accumulates these $m$ factors ($R = B_1 B_2 \dots B_m$) into a dense identity matrix to construct the full $d \times d$ rotation matrix before applying it to the input $x$. Consequently, the practical forward time complexity (Total Work) degrades to $\mathcal{O}(m \cdot d^2 \cdot b + N \cdot d^2)$. Specifically, the term $\mathcal{O}(m \cdot d^2 \cdot b)$ arises from $m$ sequential sparse-dense matrix multiplications to materialize the $d \times d$ matrix, while $\mathcal{O}(N \cdot d^2)$ accounts for the final dense projection of $N$ tokens. 
More critically, this dense materialization inflates the training memory footprint to $\mathcal{O}(m \cdot d^2 + N \cdot d)$. While $\mathcal{O}(N \cdot d)$ denotes the standard activation memory for the input features, the prohibitive $\mathcal{O}(m \cdot d^2)$ term arises from caching the intermediate dense matrices in the computational graph for backward gradients, imposing severe VRAM bottlenecks.

Furthermore, because modern Tensor Cores are strictly optimized for contiguous, large-block dense GEMMs, executing batched operations on exceptionally small block sizes ($b$) suffers from severe memory bandwidth bottlenecks and hardware padding overheads. Consequently, the theoretical computational savings of $b \ll d$ fail to translate into practical latency reductions, rendering the approach computationally intensive in real-world deployments~\cite{dao2022monarch}.

\paragraph{LoCO:} adopts input-centric transformation and circumvents prohibitive $d \times d$ dense matrices. Serial compositional adapters theoretically require a total computational work of $\mathcal{O}(n \cdot N \cdot d \cdot r + n \cdot r^3)$, where $\mathcal{O}(nNdr)$ and $\mathcal{O}(nr^3)$ denotes the cost of matrix-vector multiplication and matrix inversion, respectively (See~\Cref{eq:actual_adapter}). By virtue of full parallelism, LoCO reduces its sequential span to a depth of $\mathcal{O}(Ndr+r^3)$. Regarding activation memory, the static memory for parameter space is $\mathcal{O}(n \cdot d \cdot r)$. In addition, the framework only needs to cache $n$ independent projection tensors of size $N \times r$, without any $d \times d$ intermediate matrices, resulting in a memory footprint of $\mathcal{O}(n \cdot (d+N) \cdot r)$, where $\mathcal{O}(ndr)$ and $\mathcal{O}(nNr)$ represent gradients and activations memory, respectively.

In addition, in the naive parallel input-centric approach, we need to store $n$ intermidate tensors of size $N \times d$. The total memory for activations and gradients becomes: $\mathcal{O}(n \cdot N \cdot d) + \mathcal{O}(n \cdot d \cdot r)$. This significant increase in training overhead is \emph{eliminated} by serial compositional LoCO.

We further summarize the comparison results among 4 orthogonal fine-tuning methods in \Cref{tab:comprehensive_comparison}.

\begin{table*}[t]
\centering
\caption{Comprehensive Complexity and Structural Comparison of Orthogonal Fine-Tuning Methods, with respect to the analysis and notations above. Time complexity is decoupled into Total Work (FLOPs) and Sequential Span (Critical path length -- Latency). For sequential methods (HRA, BOFT), Span scales with the number of components, whereas LoCO achieve a constant Span relative to $n$ due its parallel additive structure. For HRA, we include WY-representation as the theoretically optimal strategy.}
\label{tab:comprehensive_comparison}
\renewcommand{\arraystretch}{1.8}
\resizebox{\textwidth}{!}{\begin{tabular}{@{} >{\raggedright\arraybackslash}m{1.2cm} >{\raggedright\arraybackslash}m{2.2cm} l l l l >{\raggedright\arraybackslash}m{5.7cm} @{}}
\toprule
\textbf{Method} & \textbf{Strategy} & \textbf{\#Params} & \textbf{Total Work (FLOPs)} & \textbf{Sequential Span} & \makecell[l]{\textbf{Memory} \\ \textbf{(Act. + Grad.)}} & \textbf{Remarks} \\ \midrule

\textbf{OFT} & Input-centric & $\mathcal{O}(db)$ & $\mathcal{O}(Ndb + db^2)$ & $\mathcal{O}(Ndb + db^2)$ & $\mathcal{O}(Nd + db)$ & Sparse block-diagonal; single-layer; local subspace interaction only. \\ \cmidrule(lr){1-7}

\textbf{BOFT} & \makecell[l]{Weight-centric \\ (Practical)} & $\mathcal{O}(mdb)$ & $\mathcal{O}(md^2b + Nd^2)$ & $\mathcal{O}(md^2b + Nd^2)$ & $\mathcal{O}(md^2 + Nd)$ & Sequential chain; $d^2$ bottleneck during dense materialization, negating theoretical FLOPs. \\ \cmidrule(lr){1-7}

\textbf{HRA} & WY-Refined & $\mathcal{O}(dr)$ & $\mathcal{O}(Ndr + dr^2 + Nr^2)$ & $\mathcal{O}(rd + Ndr + r^2)$ & $\mathcal{O}(Nr + dr)$ & High-depth Householder chain due to vector-based reflection; In their practical execution, sequential transformations impose $\mathcal{O}(d^2)$ overhead. \\ \cmidrule(lr){1-7}

\textbf{LoCO (ours)} & \makecell[l]{{Parallel} \\ {Input-centric}} & ${\mathcal{O}(ndr)}$ & $\mathcal{O}(nNdr + nr^3)$ & $\mathbf{\mathcal{O}(Ndr + r^3)}$ & ${\mathcal{O}(nr(N + d))}$ & Orthogonal approximation; parallel additive structure; shallow depth $\left( n \in \{1, 2, 4\} \right)$. \\ \bottomrule
\end{tabular}}
\end{table*}

\subsection{Theoretical analysis on approximation error}
\label{sec:app_error}
In this section, we provide detailed proofs of the upper bound of approximation error in \Cref{sec:model_method}.

\begin{theorem}[Upper Bound of Approximation Error]
    \label{thm:upper_bound_dev}
    Let $\rmZ_i = \Delta_i = 2\rmX_i(\rmI-\rmY_i^\top \rmX_i)^{-1}\rmY_i^\top$ denote the update components in \Cref{sec:model_method}. 
    
    Consider the generic expansion of a rotation chain $\mathbf{R} = \prod_{i=1}^{n} (\rmI + \rmZ_i)$, where $(\rmI + \rmZ_i) \in \textnormal{SO}(d)$. Its first-order parallel approximation is defined as $\tilde{\rmR} = \rmI + \sum_{i=1}^{n} \rmZ_i$. 
    
    If the Frobenius norm of each component is bounded by $\left\lVert \rmZ_i \right\rVert_F \leq \gamma$, then the orthogonal deviation of the parallel approximation is strictly bounded by:
    \begin{equation}
        \left\lVert \rmI - \tilde{\rmR}^\top \tilde{\rmR} \right\rVert_F \leq n(n-1) \gamma^2
    \end{equation}
\end{theorem}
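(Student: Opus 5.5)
The plan is to compute the orthogonality defect $\rmI - \tilde{\rmR}^\top\tilde{\rmR}$ exactly, and use the orthogonality of each factor $\rmI+\rmZ_i$ to cancel every diagonal contribution. Only the cross terms should survive, and these can then be bounded crudely.

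\textbf{Step 1: the per-factor identity.} Since $\rmI + \rmZ_i \in \SO(d)$, we have $(\rmI+\rmZ_i)^\top(\rmI+\rmZ_i) = \rmI$. Expanding this gives
\begin{equation*}
\rmZ_i + \rmZ_i^\top = -\rmZ_i^\top \rmZ_i .
\end{equation*}
This says the symmetric part of each $\rmZ_i$ is exactly a quadratic quantity, which is what will kill the linear terms below. It uses only orthogonality of the factor, not the Cayley/Woodbury form in \Cref{eq:actual_adapter}.

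\textbf{Step 2: expand the Gram matrix.} Write $\rmS = \sum_i \rmZ_i$, so that
\begin{equation*}
\tilde{\rmR}^\top\tilde{\rmR} = \rmI + \rmS + \rmS^\top + \rmS^\top\rmS .
\end{equation*}
Substituting Step 1 into the linear term gives $\rmS + \rmS^\top = -\sum_i \rmZ_i^\top\rmZ_i$. Since $\rmS^\top\rmS = \sum_{i,j}\rmZ_i^\top\rmZ_j$, the diagonal pieces $i=j$ cancel exactly, leaving
\begin{equation*}
\rmI - \tilde{\rmR}^\top\tilde{\rmR} = -\sum_{i\neq j}\rmZ_i^\top\rmZ_j .
\end{equation*}

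\textbf{Step 3: bound the cross terms.} Apply the triangle inequality over the $n(n-1)$ ordered pairs with $i \neq j$. Then use submultiplicativity of the Frobenius norm, $\|\rmZ_i^\top\rmZ_j\|_F \le \|\rmZ_i\|_F\|\rmZ_j\|_F \le \gamma^2$. Summing over the pairs gives the bound $n(n-1)\gamma^2$.

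The main obstacle is Step 2: one has to recognize that Step 1 exactly cancels the self-interaction terms, rather than merely bounding them. Without this cancellation one would only get an $\mathcal{O}(n\gamma + n^2\gamma^2)$ bound.

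A minor point concerns the word ``strictly'' in the statement. The argument naturally yields $\le$, and I would note that this is all that is claimed in substance.
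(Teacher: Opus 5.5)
Your proposal is correct and matches the paper's proof step for step. Both arguments derive the per-factor identity $\rmZ_i+\rmZ_i^\top=-\rmZ_i^\top\rmZ_i$, use it to cancel the diagonal terms of $\rmS^\top\rmS$ exactly (leaving $-\sum_{i\neq j}\rmZ_i^\top\rmZ_j$), and then apply the triangle inequality and Frobenius submultiplicativity over the $n(n-1)$ ordered pairs; your remark on ``strictly'' is also apt, since the paper's argument too yields only a non-strict inequality.
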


\begin{proof}
    Consider the orthogonal deviation: $\Delta \rmR_O = \rmI - \tilde{\rmR}^\top \tilde{\rmR}$.
    
Since $\rmR_i = \rmI + \rmZ_i$ is orthogonal, we have $(\rmI + \rmZ_i)^\top (\rmI + \rmZ_i) = \rmI$, which yields:
    \begin{equation}
        \label{eq:zi_identity}
        \rmZ_i^\top + \rmZ_i = -\rmZ_i^\top \rmZ_i
    \end{equation}
    
Let $\rmS = \sum_{i=1}^n \rmZ_i$. The Gram matrix of the approximation expands as:
    \begin{gather}
        \tilde{\rmR}^\top \tilde{\rmR} = (\rmI +\rmS^\top)(\rmI +\rmS) = \rmI + \rmS^\top + \rmS + \rmS^\top \rmS \notag \\[1ex]
        \Rightarrow \Delta \rmR_O = -[\rmS^\top \rmS + (\rmS^\top + \rmS)]
    \end{gather}
    
By substituting \Cref{eq:zi_identity} into the linear term $(\rmS^\top + \rmS)$, we obtain:
    \begin{align}
        \Delta \rmR_O &= - \left( \sum_{i=1}^n \sum_{j=1}^n \rmZ_i^\top \rmZ_j - \sum_{i=1}^n\rmZ_i^\top \rmZ_i \right) \notag\\[1ex]
         &= - \sum_{i \neq j} \rmZ_i^\top \rmZ_j
    \end{align}
    
Applying the triangle inequality and the submultiplicativity of the Frobenius norm ($\lVert \rmA\rmB \rVert_F \leq \lVert \rmA \rVert_F \lVert \rmB \rVert_F$), we have:
    \begin{align}
        \left\lVert \Delta \rmR_O \right\rVert_F &= \left\lVert \sum_{i \neq j} \rmZ_i^\top \rmZ_j \right\rVert_F \leq \sum_{i \neq j} \left\lVert  \rmZ_i^\top \rmZ_j \right\rVert_F\notag \\[1ex]
        \Rightarrow \left\lVert \Delta \rmR_O \right\rVert_F &\leq \sum_{i \neq j} \left\lVert \rmZ_i^\top \right\rVert_F \left\lVert \rmZ_j \right\rVert_F 
    \end{align}
    
Since $\left\lVert \rmZ_i^\top \right\rVert_F = \left\lVert \rmZ_i \right\rVert_F \leq \gamma$, and there are exactly $n(n-1)$ terms where $i \neq j$, we conclude:
    \begin{equation}
        \left\lVert \rmI - \tilde{\rmR}^\top \tilde{\rmR} \right\rVert_F \leq n(n-1) \gamma^2
    \end{equation}
    This completes the proof.
\end{proof}
\Cref{thm:upper_bound_dev} shows the upper bound of deviation error in orthogonal approximation, providing the formal justification for LoCO's parallelized architecture. It mainly includes these properties below to advocate the effectiveness of the parallel mechanism:
\begin{itemize}
	\item \textbf{Mathematical Validation of Parallelism:} The theorem justifies replacing sequential rotation products with a parallel additive structure by proving the error remains bounded.
	\item \textbf{Guaranteed Orthogonality Approximation:} The upper bound $n^2\gamma^2$ ensures ``quasi-orthogonality'': for small update magnitude $\gamma$, the orthogonal properties are preserved.
	\item \textbf{Optimized Design Space:} The quadratic relationship with $n$ suggests that for small $n$, the deviation is minimal, providing a theoretical "safe zone" for the shallow configurations -- $n \in \{1,2,4\}$ -- of the LoCO architecture.
\end{itemize}

In addition, we provide empirical measurements of orthogonality deviation, and Frobenius norm of $\rmZ_i$ from the fine-tuned checkpoints in \Cref{fig:orthogonal_deviation}. The orthogonality deviation (~\Cref{fig:orthogonal_deviation}a) remains relatively small across all layers, and the perturbation norm $\|\Delta_i\|_F$ (~\Cref{fig:orthogonal_deviation}b) confirms that learned parameters stay within the regime where the approximation is valid. 

\begin{figure}[h]
	\centering
	\begin{tikzpicture}
	\node at (0,0) {\includegraphics[width=0.18\textwidth]{figure/theorem/orthogonality_violin_plot.jpg}};
	\node at (4,0) {\includegraphics[width=0.18\textwidth]{figure/theorem/delta_norm_violin_plot.jpg}};
	\node at (0, -1.7) {(a)};
	\node at (4, -1.7) {(b)};
	\end{tikzpicture}
	\caption{(a) Orthogonality deviation $\|\tilde{\mathbf{R}}^\top \tilde{\mathbf{R}} - \mathbf{I}\|_F$ extracted from a trained diffusion model. (b) Distribution of perturbation norms $\left\lVert \rmZ_i \right\rVert_F$ or $\left\lVert \Delta_i \right\rVert_F$ from the same checkpoint. Here, $\tilde{\mathbf{R}} \in \mathbb{R}^{3072 \times 3072}$ is a high-dimensional matrix. The deviation norm remains low (mean $\approx 0.1$), indicating that the approximation roughly preserves vector norms under transformation.}
	\label{fig:orthogonal_deviation}
\end{figure}

\section{Experimental Details}

\subsection{Fine-tuning Large language models}
In this section regarding LLMs, we further describe experimental details and illustrate additional outputs on language tasks at~\ref{sec_llm}. They include experiments on natural language understanding/generation, computation efficiency evaluation, and ablation study on the impact of LoCO configurations.

\subsubsection{Natural Language Understanding}
\label{appendix:nlu_exps}

In regard to our experiments on the GLUE benchmark, we only tune the learning rate, and the number of training epochs. We adapt pre-trained DeBERTA-V3~\cite{he2021deberta} as the base model, then apply orthogonal methods to every linear module in each transformer block while freezing the pre-trained weights.

Particularly, we choose 8 datasets from the benchmark, including CoLA, SST-2, MRPC, m-MNLI, QNLI, RTE, STS-B, QQP.

For each specific task, the pooling layers and classification head are trained from scratch with individual classification learning rate. The settings for each dataset and other common factors are provided in Table~\ref{tab:app_llms:loco_nlu_common} and Table~\ref{tab:app_llms:loco_nlu_datasets}, respectively.

\begin{table}[H]
    \centering
    \resizebox{0.70\columnwidth}{!}{\begin{tabular}{l c}
        \toprule
        \textbf{Hyperparameter} & \textbf{Value} \\
        \midrule
        Optimizer & AdamW \\
        Optimizer ($\beta_1, \beta_2$) & (0.9, 0.99) \\
        Learning rate scheduler & Cosine \\
        Classifier learning rate & 2e-3 \\
        LoCO dropout & 0.0 \\
        \bottomrule
        \end{tabular}}
    \caption{Common hyperparameters shared across all GLUE tasks.}
    \label{tab:app_llms:loco_nlu_common}
\end{table} 
In order to do replication of HRA (taking regularization coefficient $\lambda{=}0$) and OFT/BOFT, we follow the settings in their papers~\cite{qiu2023controlling,liu2024boft}, and use a single learning rate value for all trainable parameters. The corresponding configurations are also mentioned as a part of Table~\ref{tab:glue_deberta}.

\subsubsection {Evaluation on mathematical reasoning tasks}
\label{appendix:nlg_math}

In the fine-tuning experiments on the LLaMA2 model, we fix the max sequence length as 512, which is sufficient for these tasks.
Following BOFT~\cite{liu2024boft}, we adapt the LLaMA2-7B model~\cite{touvron2023llama2openfoundation} on the first 512 tokens of MetaMath-40K~\cite{yu2023metamath}. The details are given in Table~\ref{tab:app_llms:metamath}

For the ablation study of different LoCO configurations in~\ref{exp:llm_abation} on LLMs, we only change combinations of $n,r$ and use the same values for other training/inference factors to ensure a fair comparison.
\begin{table}[H]
    \centering
    \setlength{\tabcolsep}{6pt}
    \resizebox{0.65\columnwidth}{!}{\begin{tabular}{l c}
        \toprule
        \textbf{Hyperparameter} & \textbf{Value} \\
        \midrule
        \multicolumn{2}{c}{\textbf{\textit{Training settings}}} \\
        \midrule
        Learning rate & 1e-3 \\
        Optimizer & AdamW \\
        Optimizer ($\beta_1, \beta_2$) & (0.9, 0.99) \\
        Learning rate scheduler & Cosine \\
        LoCO dropout & 0.1 \\
        Batch size & 32 \\
        Epoch & 3 \\
        Applied modules & \textit{Q, V} \\
        \midrule
        \multicolumn{2}{c}{\textbf{\textit{Inference settings}}} \\
        \midrule
        LLM max sequence length & default \\
        GSM new tokens & 1024 \\
        MATH new tokens & 1408 \\
        \bottomrule
        \end{tabular}}
    \caption{Hyperparameters for mathematical reasoning tasks.}
    \label{tab:app_llms:metamath}
\end{table} 
\subsubsection{Ablation study of LoCO configurations on LLMs}
\label{exp:llm_abation}
In this section, we explicitly evaluate influence of LoCo configurations on the final results, concentrating the two aspects: how rank $r$ of skew matrices affect LoCO performace, and what is the trade-off of parallel transformations $n$ in~\ref{model:Compositional Chain-of-Rotation}

\begin{table}[H]
    \centering
    \setlength{\tabcolsep}{6pt}
    \resizebox{0.80\columnwidth}{!}{
    \begin{tabular}{c c c c}
        \toprule
        \textbf{Params (\%)} & \textbf{Config ($n, r$)} & \textbf{GSM8K} & \textbf{MATH} \\
        \midrule
0.016 & $n{=}1, r{=}2$ & 41.75 & 5.80 \\
        0.031 & $n{=}1, r{=}4$ & 45.44 & 6.45 \\
        0.062 & $n{=}1, r{=}8$ & 48.40 & 7.37 \\
        \midrule
\multirow{3}{*}{0.124} 
              & $n{=}1, r{=}16$ & 50.19 & \textbf{8.40} \\
              & $n{=}2, r{=}8$  & \textbf{50.26} & 8.12 \\
              & $n{=}4, r{=}4$  & 49.46 & 7.85 \\
        \bottomrule
    \end{tabular}
    }
    \caption{Ablation study on LoCO configurations.  The first block shows the scaling effect by increasing rank $r$. The second block compares different factorization strategies ($n, r$) under a fixed parameter budget.}
    \label{tab:ablation_llm_nr}
\end{table} 
For the first question, we follow the experiment setting in~\ref{exp:llm_math}, vary the rank $r$. For the second, given a parameter budget equivalent to this experiment, $\approx$ 0.124\%, we apply and compare different configurations of $(n,r)$. The averaged results are reported in Table~\ref{tab:ablation_llm_nr}.

Our results indicate that increasing the rank $r$ consistently yields better performance. Furthermore, under a fixed parameter budget, approximating the transformation via parallel computation for low-rank skew matrices maintains competitive efficacy, demonstrating that computation efficiency benefits do not come at the cost of considerable degradation in performance.

\subsubsection{Computational Efficiency}
\label{appendix:computation_efficiency}

\begin{figure}[htbp]
	\centering
	\includegraphics[width=0.95\linewidth]{figure/efficiency/deberta64.pdf}
	\caption{\label{fig:deberta64}Training efficiency comparison on DeBERTA-V3, batch size of 64.}
	
	\vspace{0.4cm}
	\centering
	\includegraphics[width=0.95\linewidth]{figure/efficiency/llama2b16.pdf}
	\caption{\label{fig:llama2b16}Training efficiency comparison on LLaMA2-7B batch size of 16.}

	\vspace{0.4cm}
	\centering
	\includegraphics[width=0.95\linewidth]{figure/efficiency/llama2b32.pdf}
	\caption{\label{fig:llama2b32} Efficiency comparison during training time on LLaMA2-7B batch size of 32. HRA suffers from Out of memory (OOM) issue.}
\end{figure}

To compare computational efficiency, we align the configurations of all methods to match the number of trainable parameters. We define two settings: \textit{light mode} and \textit{heavy mode}, as computational costs vary significantly depending on specific hyperparameter sets. For instance, the rank $r$ impacts the matrix inversion cost, while the number of Householder reflections determines the recursion depth in HRA.

In each mode, we test with two batch size values to get a comprehensive conclusion. For Natural language understanding tasks we use the QNLI dataset~\cite{rajpurkar2016squad} on the DeBERTA-V3~\cite{he2021deberta}, batch sizes of 16 and 64, with \textit{torch.compile()}; for Natural language generation tasks, we use the MetaMath-40K~\cite{yu2023metamath} dataset on the LLaMA2~\cite{touvron2023llama2openfoundation}, batch sizes of 16 and 32, without \textit{torch.compile()}. These models and datasets are similar to the setting of the primary experiments.

For each model, the detailed descriptions for \textit{light mode} and \textit{heavy mode} are provided in Table~\ref{tab:app_llms:eff_deberta} and Table~\ref{tab:app_llms:eff_llama}. All experiments are conducted on a single NVIDIA H200 GPU with 140GB VRAM.

\begin{table}[h]
    \centering
    \setlength{\tabcolsep}{6pt}
    \resizebox{0.85\columnwidth}{!}{\begin{tabular}{l l l c}
        \toprule
        \textbf{Setting} & \textbf{Method} & \textbf{Config} & \textbf{\#Params (M)} \\
        \midrule
        \multirow{4}{*}{Light mode}
          & OFT  & $b{=}16$      & 0.622 \\
          & BOFT & $m{=}2, b{=}4$  & 0.746 \\
          & HRA  & $r{=}8$       & 0.663 \\
          & LoCo & $n{=}1, r{=}4$  & 0.663 \\
        \midrule
        \multirow{4}{*}{Heavy mode}
          & OFT  & $b{=}64$      & 2.612 \\
          & BOFT & $m{=}2, b{=}1$  & 2.737 \\
          & HRA  & $r{=}32$      & 2.654 \\
          & LoCo & $n{=}1, r{=}16$ & 2.654 \\
        \bottomrule
        \end{tabular}}
    \caption{Detailed configurations for efficiency comparison on DeBERTa-v3.}
    \label{tab:app_llms:eff_deberta}
\end{table}

\begin{table}[h]
    \centering
    \setlength{\tabcolsep}{6pt}
    \resizebox{0.85\columnwidth}{!}{\begin{tabular}{l l l c}
        \toprule
        \textbf{Setting} & \textbf{Method} & \textbf{Config} & \textbf{\#Params (M)} \\
        \midrule
        \multirow{4}{*}{Light mode} 
          & OFT  & $b{=}32$      & 4.063 \\
          & BOFT & $m{=}2, b{=}8$  & 4.456 \\
          & HRA  & $r{=}16$      & 4.194 \\
          & LoCo & $n{=}1, r{=}8$  & 4.194 \\
        \midrule
        \multirow{4}{*}{Heavy mode} 
          & OFT  & $b{=}64$      & 8.257 \\
          & BOFT & $m{=}2, b{=}1$  & 8.651 \\
          & HRA  & $r{=}32$      & 8.388 \\
          & LoCo & $n{=}1, r{=}16$ & 8.388 \\
        \bottomrule
        \end{tabular}}
    \caption{Detailed configurations for efficiency comparison on LLaMA2-7B.}
    \label{tab:app_llms:eff_llama}
\end{table} 
\begin{table}[ht]
    \centering
    \resizebox{0.90\columnwidth}{!}{\begin{tabular}{l c c c c}
    \toprule
    \textbf{Method} & \textbf{Config} & \textbf{\#Params} & \textbf{Time (ms)} & \textbf{Mem (GB)} \\
    \midrule
    OFT & $b{=}128$ & 16.65 M & 151.3 & 15.97 \\
    \midrule
    
    \multirow{3}{*}{BOFT} 
        & $m{=}2,b{=}32$ & \multirow{3}{*}{17.04 M} & 303.7 & 57.93 \\
        & $m{=}4,b{=}16$ &                         & 466.0 & 37.68 \\
        & $m{=}8,b{=}8$  &                         & 879.8 & 98.37 \\
    \midrule
    
    HRA & $r{=}64$ & 16.78 M & \textbf{OOM} & \textbf{OOM} \\
    \midrule
    
    \multirow{3}{*}{\textbf{LoCO}} 
        & $n{=}1,r{=}32$ & \multirow{3}{*}{16.78M} & 173.3 & 15.65 \\
        & $n{=}4,r{=}8$  &                         & 171.0 & 15.71 \\
        & $n{=}2,r{=}16$ &                         & 175.3 & 15.74 \\
    \bottomrule
    \end{tabular}
    }
\caption{Efficiency comparison on an extreme configuration that is equivalent to $r=64$ vectors in $\mathbb{R}^d$. \textbf{OOM}: Out Of Memory.}
    \label{tab:efficiency_compact}
\end{table} 
Moreover, we also illustrate additional results beside Figure~\ref{fig:deberta}, in the setting of DeBERTA-V3 batch 64, LLaMA2-7B batch 16 and 32, respectively in Figure~\ref{fig:deberta64},~\ref{fig:llama2b16},~\ref{fig:llama2b32}.

Similar to Figure~\ref{fig:deberta} in~\ref{exp:llm:nlu}, we can observe consistency in requirements of adapters in regard to computational resources. LoCO and OFT are more reliable and robust compared to BOFT and HRA when the models are fine-tuned with more trainable parameters, correspondingly imposed by a large value of adapter configuration.

\begin{table*}[htbp]
    \centering
    \setlength{\tabcolsep}{6pt}
    \resizebox{0.75\textwidth}{!}{\begin{tabular}{l c c c c c c c c}
        \toprule
        \textbf{Hyperparameter} & \textbf{MNLI} & \textbf{SST-2} & \textbf{CoLA} & \textbf{QQP} & \textbf{QNLI} & \textbf{RTE} & \textbf{MRPC} & \textbf{STS-B} \\
        \midrule
        Learning Rate & 1e-4 & 1e-4 & 8e-4 & 2e-4 & 1e-4 & 5e-4 & 5e-4 & 5e-4 \\
        Batch Size & 64 & 32 & 32 & 64 & 64 & 32 & 32 & 32 \\
        Epochs & 10 & 8 & 12 & 10 & 11 & 30 & 30 & 13 \\
        Max Seq Length & 256 & 128 & 64 & 320 & 512 & 320 & 320 & 128 \\
        Eval Interval & 400 & 200 & 100 & 400 & 300 & 100 & 100 & 100 \\
        \bottomrule
        \end{tabular}}
    \caption{LoCO training hyperparameters for each dataset on GLUE benchmark.}
    \label{tab:app_llms:loco_nlu_datasets}
\end{table*}
 
This pattern is more obvious when we consider a small model, small batch sizes, then most of GPU memory is allocated to activation memory of the adapters. We also have another experiments for a more extreme configuration, equivalent to LoCO $n=1, r=32$ on LLaMA2-7B, batch size of $1$, demonstrated in ~Table~\ref{tab:efficiency_compact}.

\subsection{Fine-tuning Diffusion Transformers}
\label{appendix:diffusion}
This section provides additional details regarding our experiments on fine-tuning diffusion transformers for controllable image generation, as discussed in Section~\ref{sec:diffusion}.

\paragraph{Optimizer.} Similar to OminiControl setup, we adopt the Prodigy optimizer, an adaptive learning rate method that automatically adjusts step sizes based on gradient statistics. The optimizer is configured with:
\begin{itemize}
    \item Base learning rate: $\eta = 1$
    \item Weight decay: $\lambda = 0.01$
\end{itemize}

\paragraph*{Visualization examples} Figure~\ref{fig:canny_comparison},~\ref{fig:deblurring_comparison},~\ref{fig:depth_comparison} and~\ref{fig:fill_comparison} provide qualitative comparisons of generated images across different controllable generation tasks, including Canny edge-to-image, deblurring, depth-to-image, and impanting. Our method consistently produces high-quality images that effectively adhere to the provided control signals, demonstrating the efficacy of our method in preserving semantic alignment.

\subsection{Fine-tuning Vision Transformers (ViTs)}
\label{appendix:vit}

We provide the detailed information of VTAB-1k benchmark datasets and results in Table~\ref{tab:vtab_statistics} and Table~\ref{tab:vtab}.
\begin{table*}[h]
	\centering

	\small
	\setlength{\tabcolsep}{2pt}
	\scalebox{1}{
		


	}
	\caption{Performance comparison across different methods and datasets}
	\label{tab:vtab}
\end{table*}

\begin{table*}[t]
	\centering
	\caption{Statistics of VTAB-1k benchmark datasets.}
	\label{tab:vtab_statistics}
	%
\end{table*}

\subsection{Additional Figures}
\label{sec:appendix_additional_images}

\begin{figure*}[h]
	\centering
\newcommand{\imgspacing}{1.6}
	{
	%
	
}
	\caption{Effect of varying the temperature parameter $t$ for LoCo.}
	\label{fig:varying_t}
\end{figure*}

\begin{figure*}[h]
	\centering
\newcommand{\imgspacing}{1.6}
	%
	\caption{Effect of varying the temperature parameter $t$ on image generation quality within the range $[2, 3]$. The generated images start loosing visual quality. }
	\label{fig:varying_t_2_3}
\end{figure*}

\begin{figure*}[t]
	\centering
	%
	\caption{Qualitative comparison of Canny edge-to-image generation. Columns show: input canny edges, ground truth, and outputs from LoRA, BOFT, and our method (Ours).}
	\label{fig:canny_comparison}
\end{figure*}

\begin{figure*}[t]
	\centering
	%
	\caption{Qualitative comparison of deblurring image generation. Columns show: blurred input, ground truth, and outputs from LoRA, BOFT, and our method (Ours).}
	\label{fig:deblurring_comparison}
\end{figure*}

\begin{figure*}[t]
	\centering
	%
	\caption{Qualitative comparison of depth-to-image generation. Columns show: depth map input, ground truth, and outputs from LoRA, BOFT, and our method (Ours).}
	\label{fig:depth_comparison}
\end{figure*}

\begin{figure*}[t]
	\centering
	%
	\caption{Qualitative comparison of inpainting (fill) image generation. Columns show: masked input, ground truth, and outputs from LoRA, BOFT, and our method (Ours).}
	\label{fig:fill_comparison}
\end{figure*}

\clearpage{}
\fi

\end{document}